\theoremstyle{plain}
\newtheorem{thm}{Theorem}
\newtheorem{lem}[thm]{Lemma}
\theoremstyle{definition}
\newtheorem{defn}{Definition}
\newcommand{\myvec}[1]{\boldsymbol{\mathrm{#1}}}
\newcommand{\mymat}[1]{\mathbf{#1}}
\newcommand{\eqdef}{\triangleq}
\newcommand\D{\mymat{D}}
\newcommand\Id{\mymat{Id}}
\newcommand\nAt{K}
\newcommand\dimS{N}
\newcommand{\fullset}{\Gamma}
\newcommand\G{\mathcal{G}} 
\newcommand{\gf}[1]{g(#1)}
\newcommand{\gr}[1]{\left [ #1 \right ]}
\newcommand{\grindex}[2]{#1_{\gr{#2}}}
\newcommand\grwght{w}
\newcommand\atom{\myvec{d}}
\newcommand\obs{\myvec{y}}
\newcommand\pvar{\myvec{x}}
\newcommand\multipvar{\mathbf{X}}
\newcommand{\pvaru}{\myvec{u}}
\newcommand\pvary{\myvec{z}}
\newcommand\opt[1]{\tilde{#1}}
\newcommand\pvaropt{\opt{\pvar}}
\newcommand\dvar{\myvec{\theta}}
\newcommand\dvaropt{\opt{\dvar}}
\newcommand{\nvec}{\myvec n}
\newcommand{\feas}{\myvec{v}}
\newcommand{\auxvar}{\boldsymbol{\alpha}}
\newcommand\pen{\lambda}
\newcommand\R{\mathbb{R}} 
\newcommand\step{p}
\newcommand{\scr}{\mathcal{I}}
\newcommand{\comp}[1]{{#1}^c}
\newcommand{\card}[1]{\left |#1\right |}
\newcommand{\ST}{T}
\newcommand{\ie}{\textit{i.e.}\xspace} 
\newcommand{\eg}{\textit{e.g.}\xspace} 
\newcommand\lasso{\textit{Lasso}\xspace}
\newcommand\glasso{\textit{Group-Lasso}\xspace}
\newcommand\lars{\textit{LARS}\xspace}
\newcommand{\spreg}{\Omega}
\newcommand{\norm}[1]{\left\|#1\right\|} 
\newcommand{\normO}[1]{\left\|#1\right\|_0} 
\newcommand{\normI}[1]{\left\|#1\right\|_1} 
\newcommand{\normII}[1]{\left\|#1\right\|_2} 
\newcommand{\normInf}[1]{\left\|#1\right\|_\infty} 
\newcommand{\normM}[1]{\left\|#1\right\|} 
\newcommand{\abs}[1]{ |#1|} 
\newcommand{\argmax}[1]{\operatorname*{arg\,max}_{#1}} 
\newcommand{\argmin}[1]{\operatorname*{arg\,min}_{#1}} 
\newcommand{\sign}{\operatorname*{sign}} 
\newcommand{\coef}[2]{\mathnormal{#1}(#2)}
\newcommand{\prox}[3]{\operatorname{prox}^{#1}_{#2}\left( #3 \right)}
\newcommand{\ellips}{\mathcal{E}}
\newcommand\sph{\mathcal{S}}
\newcommand\sphrad{r}
\newcommand{\sphradf}[1]{\sphrad_{#1}}
\newcommand\sphcent{\myvec{c}}
\newcommand\recMat{\mymat{T}}
\newcommand{\safe}[1]{\mathrm{SAFE}_{#1}}
\newcommand{\gsafe}[1]{\mathrm{GSAFE}_{#1}}
\newcommand{\gsphere}{\mathrm{GSPHERE}}
\newcommand{\hsst}[1]{\mathrm{DST3}_{#1}}
\newcommand{\ghsst}[1]{\mathrm{DGST3}_{#1}}
\newcommand{\hdst}[1]{\mathrm{DDome}_{#1}}
\newcommand{\flops}[1]{\mathrm{flops_{#1}}}
\newcommand{\tx}[1]{\text{#1}}
\newcommand{\feasset}{\mathcal{V}_*}
\newcolumntype{L}[1]{>{\raggedright\let\newline\\\arraybackslash\hspace{0pt}}m{#1}}
\newcolumntype{C}[1]{>{\centering\let\newline\\\arraybackslash\hspace{0pt}}m{#1}}
\newcolumntype{R}[1]{>{\raggedleft\let\newline\\\arraybackslash\hspace{0pt}}m{#1}}
\newcommand{\dict}{\D}
\renewcommand{\it}{{t}}
\newcommand{\nAtoms}{\nAt}
\newcommand{\dimSig}{\dimS}
\author{Antoine Bonnefoy, 
 Valentin Emiya, 
 Liva Ralaivola, 
 R\'emi Gribonval.
 \thanks{This work was supported by Agence Nationale 
 de la Recherche (ANR), project GRETA 12-BS02-004-01. R.G. acknowledges funding 
 by the European Research Council within the PLEASE project under grant ERC-StG-2011-277906.}}
\title{Dynamic Screening: Accelerating First-Order Algorithms for the Lasso and Group-Lasso}
\begin{document}
\maketitle


\begin{abstract}
Recent computational strategies based on screening tests have been proposed to accelerate
algorithms addressing penalized sparse regression problems such as the Lasso.
Such approaches build upon the idea that it is worth dedicating some small 
computational effort to locate inactive atoms and remove them from the dictionary 
in a preprocessing stage so that the regression algorithm working with a smaller 
dictionary will then converge faster to the solution of the initial problem.
We believe that there is an even more efficient way to screen the 
dictionary and obtain a greater acceleration: inside each iteration of the regression 
algorithm, one may take advantage of the algorithm computations to obtain a 
new screening test for free with increasing screening effects along the iterations. 
The dictionary is henceforth dynamically screened instead of being screened 
statically, once and for all, before the first iteration. We formalize this dynamic 
screening principle in a general algorithmic scheme and apply it by embedding inside
a number of first-order algorithms adapted existing screening tests to solve the \lasso 
or new screening tests to solve the \glasso. 
Computational gains are assessed in a large set of 
experiments on synthetic data as well as real-world sounds and images. They 
show both the screening efficiency and the gain in terms running times.
\end{abstract}

\begin{IEEEkeywords}
 Screening test, Dynamic screening, Lasso, Group-Lasso, Iterative Soft Thresholding, Sparsity.
\end{IEEEkeywords}


\section{Introduction} 

In this paper, we focus on the numerical solution of optimization
problems that consist in
minimizing the sum of an $\ell_2$-fitting term and a sparsity-inducing regularization term.
Such problems are of the form:
\begin{align}
\label{eqn:sp_lsproblem}
\mathcal{P}(\lambda, \spreg, \D ,\obs):  \pvaropt \triangleq  \argmin{\pvar} \frac{1}{2}\| \D \pvar -\obs \| _2^2 + \lambda \spreg (\pvar),
\end{align}
where $\obs \in \R^{\dimS}$ is an observation; $\D \in 
\R^{N\times K}$ with $N\leq K$ is a matrix called dictionary; $\spreg : \R^{\nAt} \rightarrow  
\R_+$ is a convex sparsity-inducing regularization function; and $\lambda>0$ is a parameter that 
 governs the tradeoff between data fidelity and regularization. 
Various convex and non-smooth functions $\spreg$ may induce the sparsity of the solution $\pvaropt$.
Here, we consider two instances of problem~\eqref{eqn:sp_lsproblem}, 
the \lasso~\cite{Tibshirani94regressionshrinkage} and the \glasso~\cite{Yuan2006}, which 
differ from each other in the choice of the regularization $\spreg$. 
A key challenge is to handle~\eqref{eqn:sp_lsproblem} when both $N$ and $K$
may be large, which occurs in many real-world applications including 
denoising~\cite{Chen1998a}, inpainting~\cite{Elad2005} or classification~\cite{Wright2009a}.
Algorithms relying on first-order information only, \ie gradient-based procedures~\cite{Beck2009,ComWaj05,Daubechies2004,Wright2009}, are
particularly suited to solve these problems, as second-order based methods
---\eg using the Hessian--- imply too computationally demanding iterations.
In the $\ell_2$ data-fitting case the gradient relies on the application of the operator $\D$ and its transpose $\D^T$.
We use this feature to define \emph{first order} algorithms as those based on
the application of $\D$ and $\D^T$. The definition extends to primal-dual 
algorithm~\cite{Arrow64,Chamb2011}.

Accelerating these algorithms remains challenging:
even though they provably have fast convergence~\cite{Nesterov1983,Beck2009}, the multiplications by $\D$ and
$\D^T$ in the optimization process are a bottleneck in their computational efficiency, which is thus governed by the
dictionary size. We are interested in the general case where no fast transform is associated with $\D$. This occurs
for instance with exemplar-based or learned dictionaries. Such accelerations are even more needed during the 
process of learning high-dimensional dictionaries, which requires solving many problems of the form~\eqref{eqn:sp_lsproblem}.
\subsubsection*{ Screening Tests} The convexity of the objective function suggests to use the theory of convex duality~\cite{Boyd94convex} 
to understand and solve such problems.
In this context, strategies based on {\em screening
tests}~\cite{Pelckmans2012,ElGhaoui2010,Tibshirani2012,Wang2012,Xiang2011,Xiang2012}
have recently been proposed to reduce the computational cost by considering properties of the dual optimum. 
Given that the sparsity-inducing regularization $\spreg$ entails an optimum $\pvaropt$ that may contain 
many zeros, a screening test is a method aimed at locating a subset of such zeros. It is formally defined as 
follows:

\begin{defn}[Screening test]\label{def:scrTest}
Given problem $\mathcal{P}(\pen, \spreg, \D, \obs)$ with solution $\pvaropt$, a boolean-valued function
${\ST :  [1 \ldots \nAt] \rightarrow \{0,1\} }$ is a screening
test if and only if:
\begin{align}
\forall i \in  [1 \ldots \nAt], \ST(i)=1 \Rightarrow \coef{\pvaropt}{i} = 0 \label{prop:screenTest}.
\end{align}
\end{defn}
We assume that solution $\pvaropt$ is unique ---\eg, it is true with probability one 
for the \lasso if $\D$ is drawn from a continuous distribution~\cite{Tibshirani2013}.
In general, a screening test cannot detect all zeros in $\pvaropt$, that is why relation~\eqref{prop:screenTest}
is not an equivalence. An efficient screening test locates many zeros among those of $\pvaropt$.
From a screening test $\ST$ a {screened dictionary} $\D_0 = \D \recMat$ is defined,
where the matrix $\recMat$ removes from $\D$ the \emph{inactive} atoms corresponding 
to the zeros located by the screening test $\ST$. 
$\recMat$ is built by removing, from the $K \times K$ identity matrix, the columns corresponding to screened atoms 
\ie columns indexed by $i \in [1 \ldots \nAt]$ whenever verifies ${\ST(i) =1}$.
Property~\eqref{prop:screenTest} implies that $\pvaropt_0$, the solution of problem $\mathcal{P}(\lambda,\spreg,\D_0 ,\obs)$,
is exactly the same as $\pvaropt$ the solution of $\mathcal{P}(\lambda,\spreg,\D ,\obs)$ up to 
inserting zeros at the locations of the removed atoms: $\pvaropt = \recMat \pvaropt_0$.
Any optimization procedure solving problem $\mathcal{P}(\pen, \spreg,\D_0,\obs)$ with the screened dictionary $\D_0$
therefore computes the solution of $\mathcal{P}(\lambda, \spreg,\D ,
\obs)$ at a lower computational cost.
Algorithm~\ref{alg:staticscreen} depicts the commonly used strategy~\cite{ElGhaoui2010,Xiang2011} to obtain an
 algorithmic acceleration using a screening test; {it rests upon two steps:
i) locate some zeros of $\pvaropt$ thanks to a {screening test}
and construct the {\em screened} dictionary $\D_0$ and ii) solve 
 $\mathcal{P}(\lambda, \spreg, \D_0,\obs)$ using the smaller dictionary $\D_0$.


\subsubsection*{Dynamic Screening}

We propose a new screening principle called \emph{dynamic screening} in order to 
reduce even more the computational cost of first-order algorithms.
We take the aforementioned concept of {screening test} one step further,
and improve existing
screening tests by \emph{embedding} them in the iterations of first-order algorithms. 
We take advantage of the computations made during the optimization procedure to perform a 
new {screening test} at each iteration 
 with a \emph{negligible} computational overhead, and we consequently \emph{dynamically} reduce the size of $\D$.
For a schematic comparison, the existing \emph{static} screening and the proposed \emph{dynamic} screening are
sketched in Algorithms~\ref{alg:staticscreen} and \ref{alg:dynscreen}, respectively. One may observe that with the
dynamic screening strategy, the dictionary $\D_{\it}$ used at each iteration $\it$ is possibly smaller and smaller 
thanks to successive screenings.



\begin{figure}[t] 
\center
\begin{minipage}[t]{.4\textwidth}

\vspace{-0.4cm}
\begin{algorithm}[H]

\algsetblockx[chose]{noindentloop}{noindentendloop}{9}{+0.5em}
		[1][]{ \raggedright \hspace{-1.1em}\textbf{loop #1}}
		[1][]{ \raggedright \hspace{-1.1em}\textbf{end loop}}

\caption{\\Static screening strategy }
\label{alg:staticscreen} 
\small
\begin{algorithmic} 
\State \hspace{-1.1em} $\D_0 \gets$ Screen $\D$
\noindentloop{t}
\State $\pvar_{\it+1}\leftarrow$Update $\pvar_{\it}$ using $\D_0$
\noindentendloop 
\end{algorithmic}
\end{algorithm} 
\end{minipage}
\begin{minipage}[t]{.4\textwidth}
\vspace{-0.4cm}
\begin{algorithm}[H]
\caption{\\Dynamic screening strategy}
\algsetblockx[chose]{noindentloop}{noindentendloop}{9}{+0.5em}
		[1][]{ \raggedright \hspace{-1.1em}\textbf{loop #1}}
		[1][]{ \raggedright \hspace{-1.1em}\textbf{end loop}}
\label{alg:dynscreen}
\small
\begin{algorithmic}
\State \hspace{-.8em}$\D_{0}\gets\D$
\noindentloop{t}
\State  $\pvar_{\it+1}\gets$ Update $\pvar_{\it}$ using $\D_{\it}$
\State  $\D_{\it+1}\gets$ Screen $\D_{\it}$ using $\pvar_{\it+1}$
\noindentendloop
\end{algorithmic}
\end{algorithm}
\end{minipage}
\end{figure}

\subsubsection*{Illustration}
We now present a brief illustration of the dynamic screening principle in action, dedicated to the impatient
 reader who wishes to get a good grasp of the approach without having to enter the mathematics in too much detail.
The dynamic screening principle is illustrated in Figure~\ref{fig:screenprogress} 
in the particular case of the combined use of ISTA~\cite{Daubechies2004} and of a new, dynamic version
 of the SAFE screening test~\cite{ElGhaoui2010} to solve the \lasso problem, 
which is~\eqref{eqn:sp_lsproblem} with $\spreg(\pvar) = \normI{\pvar}$.
The screening effect is illustrated through the evolution of the size of the dictionary,
\ie, the number of atoms remaining in the screened dictionary $\D_{\it}$.
In this example the observation $\obs$ and all the $\nAt=50000$ atoms are vectors drawn
uniformly and independently on the unit sphere in dimension $\dimS=5000$ of $\D$,
and we set $\lambda = 0.75\times\normInf{\D^T\obs}$.
Here, the actual screening begins to have an effect at iteration $t=20$. In about ten iterations, the dictionary is 
dynamically screened down to $5 \%$ of its initial size and the computational cost of each iteration gets 
lower and lower. Then, the last iterations are performed with a reduced computational cost. 
Consequently, the total running time equals 4.6 seconds while it is 11.8 seconds if no screening is 
used. One may also observe that the screening test is inefficient in the first $20$ iterations. 
In particular, the dynamic screening at the very first iteration is 
strictly equivalent to the state-of-the-art (static) screening. Static screening would have been of no use in this 
case.


\begin{figure}[t]
\centering
\includegraphics[width=0.55 \textwidth]{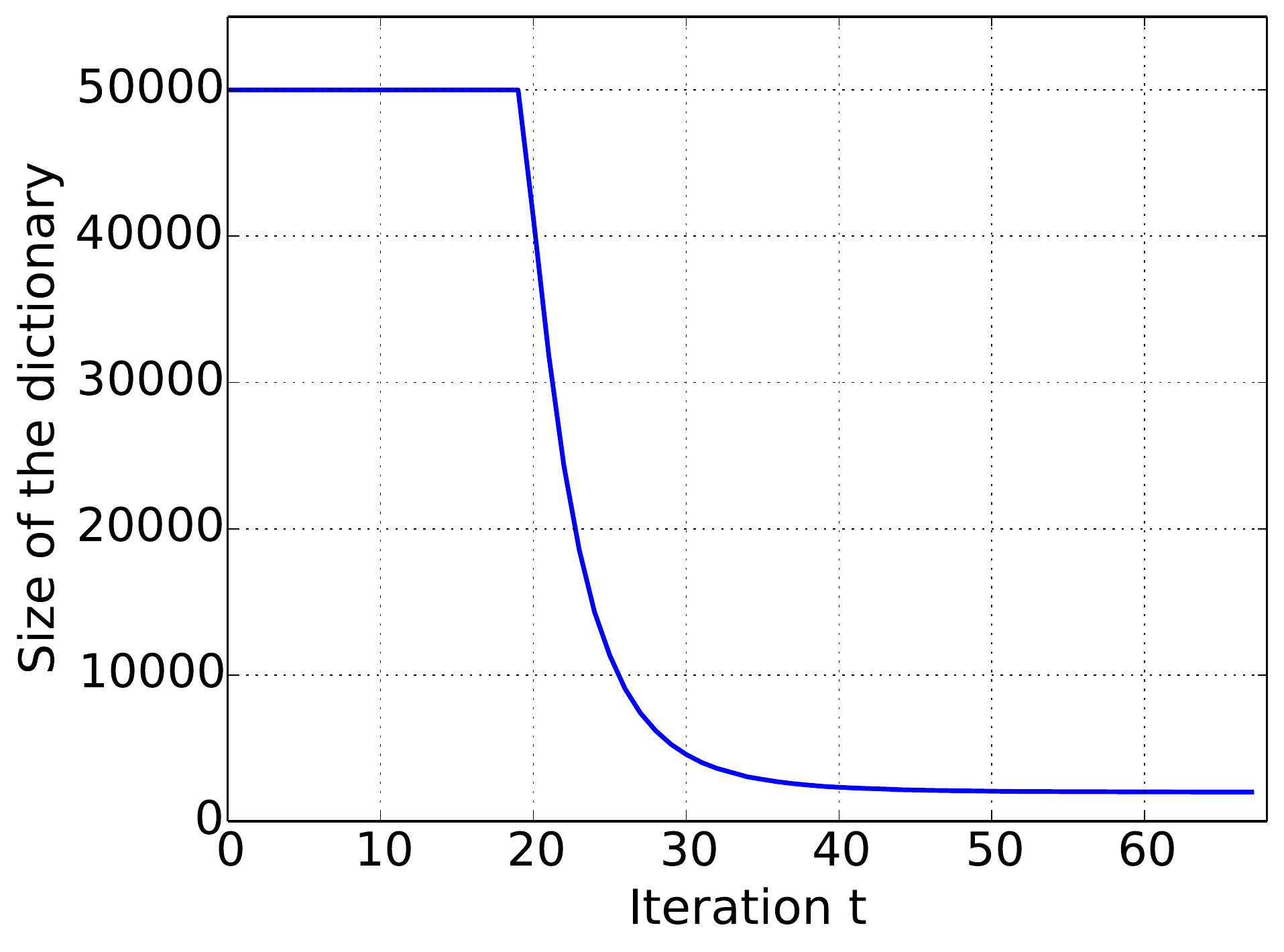} 
\caption{Size of the dictionary $\D_{\it}$ (number of atoms) as a function of the iteration $\it$ in a basic
 dynamic-screening setting, starting with a dictionary with $\nAt=5000$ atoms.}\label{fig:screenprogress}
\end{figure}
\subsubsection*{Contributions}
This paper is an extended version of~\cite{Bonnefoy2014}. Here we propose new screening tests for the \glasso 
and give an unified formulation of the dynamic screening principle for both \lasso and \glasso, which improves screening tests for both problems.
The algorithmic contributions and 
related theoretical results are introduced in Section~\ref{sec:gal_dynscreen}. First, the dynamic screening 
principle is formalized in a general algorithmic scheme (Algorithm~\ref{alg:GalDynScreenDetail}) and its 
convergence is established (Theorem~\ref{thm:DynScrCvg}). Then, we show how to instantiate this scheme 
for several first-order algorithms (Section~\ref{sec:first-order}), as well as for the two considered problems
the \lasso and \glasso (Sections~\ref{sec:dynscr_lasso} and~\ref{sec:dynscr_glasso}). We adapt existing tests to make them 
dynamic for the \lasso and propose new screening tests for the \glasso. A turnkey instance of the 
proposed approach is given in Section~\ref{sec:all_in_one}. Finally, the computational complexity of the 
dynamic screening scheme is detailed and discussed in Section~\ref{sec:optimandscreen_comput}.
In Section~\ref{sec:simulations}, 
experiments show how the dynamic screening \emph{principle} 
significantly reduces the computational cost of first-order optimization algorithms for a large range of 
problem settings and algorithms.
We conclude this paper by a discussion in Section~\ref{sec:discussion}. 
All proofs are given in Appendix.

\section{Generalized Dynamic Screening}\label{sec:gal_dynscreen}

\paragraph*{Notation and definitions}

$\dict\triangleq\left[\atom_1,\ldots,\atom_\nAtoms\right]\in\R^{\dimSig\times \nAtoms}$ denotes a {\em dictionary} and
${\fullset \triangleq \left\lbrace 1,\ldots,K\right\rbrace}$ denotes the set of integers indexing the columns, or \emph{atoms}, of $\dict$. The $i$-th component of $\pvar$ is denoted by $\coef{\pvar}{i}$.
For a given set  $\mathcal{I} \subset \fullset$, $\card{\mathcal{I}}$ is the cardinal of $\mathcal{I}$, 
$\comp{\mathcal{I}}\triangleq\fullset\backslash\mathcal{I}$
is the complement of $\mathcal{I}$ in $\fullset$ and 
$\D_{[\mathcal{I}]} \eqdef \left[ \atom_i\right]_{i\in\mathcal{I}}$
denotes the sub-dictionary composed of the atoms indexed by elements of $\mathcal{I}$.
The notation extends to vectors: $\grindex{\pvar}{\mathcal{I}} \eqdef [\coef{\pvar}{i}]_{i \in \mathcal{I}}$.
Given two index sets $\mathcal{I},\mathcal{J}\subset \fullset$ and a matrix $\mymat{M}$, 
 $\grindex{\mymat{M}}{\mathcal{I},\mathcal{J}} \eqdef 
[\mymat{M}(i, j)]_{(i,j) \in\mathcal{I} \times \mathcal{J}}$ denotes the sub-matrix of $\mymat{M} $ obtained by selecting the rows and columns indexed by elements in $\mathcal{I}$ and $\mathcal{J}$, respectively.
We denote primal variables vectors by $\pvar \in \R^\nAt$ and dual variables vectors by $\dvar \in \R^\dimS$. 
We denote by $[r]_{a}^b \eqdef  \max(\min(r,b),a)$ the projection of $r$ onto the segment $[a,b]$.
Without loss of generality, the observation $\obs$ and the atoms $\atom_i$ are assumed to have unit $\ell_2$ norm. 
For any matrix $\mymat{M}$, $\normM{\mymat{M}}$ denotes its spectral norm, \ie, its largest singular value.


\subsection{Proposed general algorithm with dynamic screening}
Dynamic screening is dedicated to accelerate the computation of the solution of problem~\eqref{eqn:sp_lsproblem}. It is presented here in a general way.

Let us consider a problem $\mathcal{P}(\lambda, \spreg, \D ,\obs)$ as defined by eq.~\eqref{eqn:sp_lsproblem}.
First-order algorithms are iterative optimization procedures that may be resorted to solving
$\mathcal{P}(\pen, \spreg, \D, \obs)$.
Based only on applications of $\D$ and $\D^T$,
they build a sequence of iterates $\pvar_{\it}$ that
converges, either in terms of objective values or the iterate itself, to the solution of the problem.
In the following, we use the update step function $\step( \cdot)$ as a generic notation to refer to any first-order algorithm.
The optimization procedure might be formalized as the update 
$$ ( \multipvar_{\it}, \dvar_{\it} , \auxvar_{\it} ) \gets \step (\multipvar_{\it-1}, \dvar_{\it-1}, 
\auxvar_{\it-1}, \D)$$
of several variables. Matrix $\multipvar_{\it}$ is composed of one or several columns 
that are primal variables and from which one can extract $\pvar_{\it}$;
vector $\dvar_{\it}$ is an updated variable of the dual space $\R^\dimS$ ---in the sense 
of convex duality (see~\ref{sec:dynscr_lasso} equation \eqref{eqn:dual} for details); and $\auxvar_{\it}$ is a list of updated auxiliary scalars in $\R$. 

Algorithm~\ref{alg:GalDynScreenDetail} makes explicit the use of the introduced notation $\step(\cdot)$ in the
general scheme of the dynamic screening principle. 
The inputs are: the data that characterize problem $\mathcal{P}(\lambda, \spreg, \D ,\obs)$; the update 
function $\step( \cdot)$ related to the first-order algorithm to be accelerated; an initialization $\multipvar_0$; 
and a family of screening tests $\left\lbrace\ST_{\dvar}\right\rbrace_{\dvar \in \R^{\dimS}}$ in the 
sense\footnote{Algorithm~\ref{alg:GalDynScreenDetail} uses notation with screening tests indexed by a dual point 
$\dvar$ however the proposed Algorithm~\ref{alg:GalDynScreenDetail}  is valid in a more general 
case for any screening test $\ST$ that may be designed. 
We use the notation $\ST_{\dvar}$ since in this paper ---and in the literature--- screening tests are based on a dual point.}
 of Definition~\ref{def:scrTest}. Iteration $\it$ begins with an update step at line~\ref{inAlg:optim}. It is followed by a screening 
stage in which a screening test $\ST_{\dvar_{\it}}$ is computed using the dual point $\dvar_{\it}$ obtained during the update. 
As shown at line~\ref{inAlg:ScrTest}, it enables to detect new inactive atoms and to update index set $\scr_{\it}$ that gathers 
all atoms identified as inactive so far. This set is then used to screen the dictionary and the 
primal variables at lines~\ref{inAlg:screenDict} and~\ref{inAlg:screenPrimal} using the screening matrix 
$\grindex{\Id}{\comp{\scr}_{\it},\comp{\scr}_{\it-1}}$ ---obtained by removing columns $\scr_{\it-1}$ and rows 
${\scr_{\it}}$ from the $\nAt\times\nAt$-identity matrix--- and its transpose. Thanks to successive screenings, the dimension shared by the
primal variables and the dictionary is decreasing and the optimization update can be computed in the reduced dimension 
$\card{\comp{\scr}_{\it}}$, at a lower computational cost. The acceleration is efficient 
because lines~\ref{inAlg:ScrTest} to~\ref{inAlg:screenPrimal} have negligible computation impact, as 
shown in Section~\ref{sec:optimandscreen_comput} and assessed experimentally in 
Section~\ref{sec:simulations}.

\begin{algorithm}[tbh]
\begin{algorithmic}[1]
\Require $\D, \obs, \lambda, \spreg, \multipvar_0, \text{ screening test } \ST_{\dvar}$
	 for any $\dvar \in \R^{\dimS}$ and first-order update $\step(\cdot) $.
\State $\D_0 \gets \D$, $\scr_0 \gets \emptyset, \it \gets 1, \bar \multipvar_{0} \gets \multipvar_{0} $ 
\While{stopping criteria on $\multipvar_\it$}
 	\State ........................  Optimization Update  ........................
	\State $ ( \multipvar_{\it}, \dvar_{\it} , \auxvar_{\it} ) \gets 
		\step (\bar \multipvar_{\it-1}, \dvar_{\it-1}, \auxvar_{\it-1}, \D_{\it-1})$ 
		\label{inAlg:optim}
	\State ................................  Screening  ................................. 
	\State $\scr_{\it} \gets \{ i \in \fullset, \ST_{\dvar_{\it}}(i)\} \cup \scr_{\it-1}$ 
	\label{inAlg:ScrTest}
	\State $\D_{\it} \gets  \D_{\it-1} \grindex{\Id}{\comp{\scr}_{\it-1},\comp{\scr}_{\it} }$
	 \label{inAlg:screenDict}
	\State $\bar \multipvar_{\it} \gets 
		\grindex{\Id}{\comp{\scr}_{\it},\comp{\scr}_{\it-1} } \multipvar_{\it}$ 
	\label{inAlg:screenPrimal}
	\State $\it \gets \it+1$ 
\EndWhile\\
\Return $\multipvar_{\it}$
\end{algorithmic}
\caption{General algorithm with dynamic screening}
\label{alg:GalDynScreenDetail}
\end{algorithm}

Since the optimization update at line~\ref{inAlg:optim} is performed in a reduced dimension, 
the iterates generated by the algorithm with dynamic screening may differ from those of the 
base first-order algorithm. The following results states that the proposed general algorithm
 with dynamic screening preserves the convergence to the global minimum of problem~\eqref{eqn:sp_lsproblem}. 

\begin{thm}\label{thm:DynScrCvg}
Let $\spreg$ be a convex and sparsifying regularization function and $\step(\cdot)$ the update function of an iterative algorithm.
If $\step(\cdot)$ is such that for any $\D, \obs, \pen$, the sequence of iterates given by $\step(\cdot)$ converges
to the solution of $\mathcal{P}(\pen,\spreg,\D,\obs)$,
then, for any family of screening tests
$\{\ST_{\dvar}\}_{\dvar \in \R^\dimS }$, the general algorithm with dynamic screening (Algorithm~\ref{alg:GalDynScreenDetail})
converges to the same global optimum of problem $\mathcal{P}(\pen,\spreg,\D,\obs)$.
\end{thm}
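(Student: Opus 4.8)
The plan is to combine three ingredients: the \emph{safety} of every test in the family (Definition~\ref{def:scrTest}), the \emph{monotonicity} of the cumulative screened set, and the assumed convergence of the base update $\step(\cdot)$ on an arbitrary fixed dictionary. First I would observe that the index set built at line~\ref{inAlg:ScrTest} satisfies $\scr_{\it}\supseteq\scr_{\it-1}$ by construction and is always contained in the finite set $\fullset$. An increasing sequence of subsets of a finite set must stabilize, so there exist a finite index $\it_0$ and a set $\scr_{\infty}\subseteq\fullset$ with $\scr_{\it}=\scr_{\infty}$ for every $\it\geq\it_0$.

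Next I would use Definition~\ref{def:scrTest} to rule out that screening ever discards an active atom. Each test $\ST_{\dvar_{\it}}$ is, by hypothesis, a screening test for $\mathcal{P}(\pen,\spreg,\D,\obs)$, so regardless of the dual point $\dvar_{\it}$ produced by the update, $\ST_{\dvar_{\it}}(i)=1$ forces $\coef{\pvaropt}{i}=0$. Taking the union over all iterations gives $\scr_{\infty}\subseteq\{\,i\in\fullset:\coef{\pvaropt}{i}=0\,\}$; equivalently, the stabilized dictionary $\D_{[\comp{\scr}_{\infty}]}$ keeps every atom whose coefficient in $\pvaropt$ is nonzero. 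For $\it\geq\it_0$ the restriction matrices $\grindex{\Id}{\comp{\scr}_{\it},\comp{\scr}_{\it-1}}$ reduce to the identity, so lines~\ref{inAlg:screenDict}--\ref{inAlg:screenPrimal} leave both the working dictionary and the primal iterate unchanged: $\D_{\it}=\D_{[\comp{\scr}_{\infty}]}$ and $\bar\multipvar_{\it}=\multipvar_{\it}$. Hence the tail of Algorithm~\ref{alg:GalDynScreenDetail} is exactly the base procedure $\step(\cdot)$ iterated on the fixed dictionary $\D_{[\comp{\scr}_{\infty}]}$, seeded by the state $(\multipvar_{\it_0},\dvar_{\it_0},\auxvar_{\it_0})$.

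I would then invoke the convergence hypothesis for the reduced problem $\mathcal{P}(\pen,\spreg,\D_{[\comp{\scr}_{\infty}]},\obs)$: its iterates converge to the unique solution $\pvaropt_{\infty}$ of that problem. To conclude, I would apply the equivalence recorded below Definition~\ref{def:scrTest}, which follows from property~\eqref{prop:screenTest}: because $\scr_{\infty}$ indexes only zero coefficients of $\pvaropt$, the zero-padding operator $\recMat=\grindex{\Id}{\fullset,\comp{\scr}_{\infty}}$ satisfies $\pvaropt=\recMat\,\pvaropt_{\infty}$. Re-inserting zeros at the screened coordinates of the converging reduced iterates therefore yields convergence to $\pvaropt$, the global optimum of $\mathcal{P}(\pen,\spreg,\D,\obs)$.

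The main obstacle is the third step. The convergence assumption on $\step(\cdot)$ is most naturally read as a statement about a run launched from a standard initialization, whereas here the stabilized phase is warm-started from the arbitrary state produced during the earlier, dimension-changing iterations. I would therefore make explicit (or assume, as is standard for convergent first-order schemes applied to a convex problem with a unique minimizer) that $\step(\cdot)$ converges to the solution from \emph{any} admissible initialization, so that the finite warm-up before $\it_0$ cannot alter the limit. A secondary, purely bookkeeping point is to verify that the dual and auxiliary variables carried across the screening steps remain dimensionally consistent with $\comp{\scr}_{\infty}$; this is guaranteed by the restriction operators and does not affect the limit.
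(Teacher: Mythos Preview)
Your proposal is correct and follows essentially the same argument as the paper's proof: stabilization of the nondecreasing screened set $\scr_{\it}$ at some finite $\it_0$, safety of all tests guaranteeing that $\scr_{\infty}$ indexes only zeros of $\pvaropt$, and then application of the base convergence hypothesis to the fixed reduced problem $\mathcal{P}(\pen,\spreg,\D_{[\comp{\scr}_{\infty}]},\obs)$. Your write-up is in fact more careful than the paper's, which simply states that ``the existing convergence proofs of the first-order algorithm with update $\step(\cdot)$ apply'' without flagging the warm-start subtlety you identify.
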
 
\begin{proof}
Since $ \forall \dvar \in \R^{ \dimS} , \ST_{\dvar}$ is a screening test, problems $\mathcal{P}( 
\lambda,\spreg,\D ,\obs)$ and $\mathcal{P}(\lambda,\spreg,\D_\it ,\obs)$ for all $\it\geq 0$ have 
the same solution.
The sequence $\{\scr_\it\}_{\it \geq 0}$ of located zeros at time $\it$ is inclusion-wise non-decreasing and 
upper bounded by the {set} of zeros in {$\pvaropt$} the solution of 
$\mathcal{P}(\lambda, \spreg,\D ,\obs)$, {so} the sequence converges in a finite number of 
iterations $\it_0$.
Then ${\forall \it\geq \it_0 , \D_{\it_0}=\D_\it}$ and the existing convergence proofs of the first-order algorithm with update $\step(\cdot)$ apply.
\end{proof}

\subsection{Instances of algorithm with dynamic screening}
\label{sec:instances}

The general form of Algorithm~\ref{alg:GalDynScreenDetail} may be instantiated for various algorithms, 
problems and screening tests. The general form with respect to first-order algorithms is instantiated 
for a number of possible algorithms in Section~\ref{sec:first-order}. 
The algorithm may be applied to solve the \lasso and \glasso problems through the choice of the 
regularization 
$\spreg$. Instances of screening tests $\ST_{\dvar}$ associated to each problem are described in 
Sections~\ref{sec:dynscr_lasso} and~\ref{sec:dynscr_glasso} respectively. 
Proofs are postponed to the appendices for readability purposes.

\subsubsection{First-order algorithm updates}
\label{sec:first-order}

The dynamic screening principle may accelerate many first-order algorithms.
Table~\ref{tab:optimstep} specifies how to use Algorithm~\ref{alg:GalDynScreenDetail} for several first-order algorithms, namely ISTA~\cite{Daubechies2004}, TwIST~\cite{Bioucas2007}, SpaRSA~\cite{Wright2009}, FISTA\cite{Beck2009} and Chambolle-Pock~\cite{Chamb2011}. 
This specification consists in defining the primal variables $\multipvar_{\it}$, the dual variable $\dvar_{\it}$, 
the possible additional variables $\auxvar_{\it}$ and the update $\step(\cdot)$ used at line~\ref{inAlg:optim}.

Table~\ref{tab:optimstep} shows two important aspects of first-order algorithms: first, the notation is 
general and many algorithms may be formulated in this way; second, every
$\step(\cdot)$ has a computational complexity in $\mathcal{O}( \nAt \dimS )$ per iteration.

\begin{table*}[tbh]
\begin{center}
\small
\begin{tabular}{lll}
\toprule
\multicolumn{1}{c}{\multirow{2}{*}{Algorithm}} 
& \multicolumn{1}{c}{\multirow{2}{*}{Nature of $\{\multipvar_{\it}, \auxvar_{\it}\}$}} 
& \multicolumn{1}{c}{ Optimization \emph{update} }\\
& & \multicolumn{1}{c}{$ \{ \multipvar_{\it}, \dvar_{\it} \} \gets \step (\multipvar_{\it-1}, \dvar_{\it-1}, \auxvar_{\it-1}, \D)$}\\\midrule

\multirow{3}{*}{ISTA~\cite{Daubechies2004}}   & \multirow{3}{*}{$\multipvar_{\it}  \eqdef \pvar_\it ,\auxvar_{\it} \eqdef L_\it$} & $\dvar_{\it} \gets \D\pvar_{\it-1} - \obs$ \\
& & $\pvar_{\it} \gets  \prox{\spreg}{\frac{\lambda}{L_\it}}{\pvar_{\it-1} - \frac{1}{L_\it} \D^T \dvar_{\it}}$\\ 
& &$L_\it$ is set with the backtracking rule see~\cite{Beck2009}\\ \midrule
\multirow{2}{*}{TwIST~\cite{Bioucas2007}}  & \multirow{2}{*}{$\multipvar_{\it}  \eqdef [\pvar_\it, \pvar_{\it-1}],\auxvar_{\it} \eqdef \emptyset$} & $\dvar_{\it} \gets \D\pvar_{\it-1} - \obs$\\
&& $\pvar_{\it} \gets  (1-\alpha)\pvar_{\it-2} + (\alpha-\beta)\pvar_{\it-1}+ \beta \prox{\spreg}{\lambda}{\pvar_{\it-1} - \D^T \dvar_{\it}}$ \\
\midrule
SpaRSA~\cite{Wright2009} & \multirow{1}{*}{$\multipvar_{\it}  \eqdef [\pvar_\it],\auxvar_{\it} \eqdef L_{\it}$} &  Same as ISTA except that $L_\it$ is set with the Brazilai-Borwein rule~\cite{Wright2009} \\ \midrule
\multirow{5}{*}{FISTA~\cite{Beck2009}} & \multirow{5}{*}{$\multipvar_{\it}  \eqdef [\pvar_\it, \pvaru_{\it}], \auxvar_{\it} \eqdef \left(l_{\it},L_\it\right)$} & $\dvar_{\it} \gets \D \pvaru_{\it -1 } -\obs$\\
&&$\pvar_{\it} \gets \prox{\spreg}{\lambda/L_\it}{\pvaru_{\it-1} - \frac{1}{L_\it}\D^T \dvar_{\it}} $\\
&&$l_{\it} \gets \frac{1+\sqrt{1+4l_{\it-1}^2}}{2}$\\
&&$ \pvaru_{\it} \gets \pvar_{\it} + (\frac{l_{\it-1}-1}{l_{\it}})(\pvar_{\it}-\pvar_{\it-1}) $\\ 
&&$L_\it$ is set with the backtracking rule see~\cite{Beck2009}\\ \midrule 
\multirow{5}{*}{Chambolle-Pock~\cite{Chamb2011}} & \multirow{5}{*}{$\multipvar_{\it} \eqdef [\pvar_{\it},{\pvaru}_{\it}],\auxvar_{\it} \eqdef \left(\tau_\it,\sigma_{\it} \right) $} 
& $\dvar_{\it} \gets \frac{1}{1+\sigma_{\it-1}}(\dvar_{\it-1} + \sigma_{\it-1}(\D {\pvaru}_{\it-1} - \obs))$\\
&&$ \pvar_{\it} \gets \prox{\spreg}{\lambda\tau_{\it-1}}{\pvar_{\it-1}- \tau_{\it-1} \D^T \dvar_{\it}} $\\
&&$ \varphi_\it \gets \frac{1}{\sqrt{1+2\gamma \tau_{\it-1}}}$\\ 
&& $ \tau_{\it} \gets \varphi_\it \tau_{\it-1};\sigma_{\it} \gets \frac{\sigma_{\it-1}}{\varphi_\it} $\\
&&$ {\pvaru}_{\it} \gets \pvar_{\it} + \varphi_{\it}(\pvar_{\it}-\pvar_{\it-1}) $ \\ 
\bottomrule
\end{tabular}
\caption{\emph{Updates} for first-order algorithms.}
\label{tab:optimstep}
\end{center} 
\vspace{-.4cm}
\end{table*}

Table~\ref{tab:optimstep} makes use of proximal operators 
${ \prox{\spreg}{\pen}{\pvar} \triangleq \min_{\mathbf{z}}\spreg\left(\mathbf{z}\right)
 +\frac{1}{2\pen}\normII{\pvar-\mathbf{z}}^2}$ 
that handle the non-smoothness of the objective function introduced by the regularization $\spreg$.
Beyond the subsequent definitions of the proximal operators for the \lasso (see eq.~\eqref{eqn:softTh}) and the \glasso (see eq.~\eqref{eqn:gr_softTh}), we refer the interested reader to~\cite{ComWaj05}
 for a full description of proximal methods.


\subsubsection{Dynamic screening for the \lasso}\label{sec:dynscr_lasso}

Let us first recall the \lasso problem before giving the screening tests $\ST_{\dvar}$ that may
be embedded in the corresponding optimization procedure.

\paragraph{The \lasso~\cite{Tibshirani94regressionshrinkage}}
The \lasso problem uses the $\ell_1$-norm penalization to enforce a sparse solution. 
The \lasso is exactly~\eqref{eqn:sp_lsproblem} using ${\spreg(\pvar) = \normI{ \pvar }}$: 
\begin{align}\label{eqn:lasso}
\mathcal{P}^\text{\lasso}(\lambda, \D ,\obs) : \argmin{\pvar} \dfrac{1}{2}\normII{\D \pvar - \obs}^2 + \pen \normI{ \pvar }.
\end{align}
The proximal operator of the $\ell_1$-norm is the so-called soft-thresholding operator: 
\begin{align}
\prox{\lasso}{t}{\pvar} \eqdef \sign( \pvar ) \max(\abs{\pvar}-t,0) \label{eqn:softTh}.
\end{align}
Screening tests~\cite{ElGhaoui2010, Xiang2012, Xiang2011} rely on the dual formulation of the \lasso problem:
\begin{subequations}\label{eqn:dual}
\begin{align} 
          \dvaropt \triangleq & \argmax{\dvar}  \frac{1}{2}\normII{\obs}^2-\frac{\lambda^2}{2}\normII{\dvar-\frac{\obs}{\lambda}}^2     \label{eqn:dual_unconstr}
         \\ & \text{ s.t. }  \forall i \in \fullset,  \abs{\dvar^T\atom_i}\leq 1 .\label{eqn:dual_constr}
\end{align}
\end{subequations}
A dual point $\dvar\in \R^{\dimS}$ is said \textit{feasible} for the \lasso if it complies with constraints~\eqref{eqn:dual_constr}.

From the convex \emph{optimality conditions}, solutions of the \lasso problem~\eqref{eqn:lasso} and its dual~\eqref{eqn:dual}, 
$\pvaropt$ and $\dvaropt$ respectively, are necessarily linked by:
      \begin{align}
          \obs & = \D\pvaropt +\lambda\dvaropt \text{ and } \forall i \in \fullset,  
	 \begin{cases}          
         | \dvaropt^T\atom_i |\leq 1&  \text{ if }  \coef{\pvaropt}{i}= 0,\\
          | \dvaropt^T\atom_i |=1&  \text{ if }  \coef{\pvaropt}{i}\neq 0.
          \end{cases}
           \label{eqn:primaldual}
      \end{align}

We define $\lambda_* \eqdef  \normInf{\D^T\obs}$. 
If $\lambda >\lambda_*$, the solution is trivial and is derived from the most simple screening test that may be designed, which screens out \emph{all} atoms.
Indeed, starting from the fact that $\dvaropt=\obs/\lambda$ is the solution of the dual problem ---it is feasible and 
maximizes~\eqref{eqn:dual_unconstr}---, we have $\forall i, | \atom_i^T \dvaropt |=| \obs^T\atom_i |/\lambda 
\leq \lambda_*/\lambda < 1$ so that the optimality conditions impose that $\coef{\pvaropt}{i}=0$. In other words, 
we can screen the whole dictionary before entering the optimization procedure. In the following, we will focus on the
 non-trivial case $\lambda \in \left]0,\lambda_*\right] $.

\paragraph{Screening tests for the \lasso}

The screening tests presented here have been proposed initially in the static perspective 
in~\cite{ElGhaoui2010, Xiang2012, Xiang2011}.
They use relation~\eqref{eqn:primaldual} to locate some inactive atoms $\atom_{i}$ for which $\abs{\atom_i^T \dvaropt} < 1$. 
The quantity is not directly accessible as the optimal is not known, thus the base concept of screening test is to geometrically construct a region $\mathcal{R}$ 
that is known to contain the optimal $\dvaropt$ so that the upper-bound $\max_{\dvar \in \mathcal{R}} 
\abs{\atom_{i}^T \dvar} \geq \abs{\atom_{i}^T \dvaropt}$ gives a sufficient condition for atom $\atom_i$ to 
be inactive: $\max_{\dvar \in \mathcal{R}}\abs{\atom_{i}^T \dvar}<1 \Rightarrow \coef{\pvaropt}{i} =0 $.
In particular the previous maximization problem admits a closed form solution when $\mathcal{R}$ is a sphere or a dome.

The SAFE test proposed by L. El Ghaoui et al. in~\cite{ElGhaoui2010} 
is derived by constructing a sphere from any dual point $\dvar$. 
Xiang et al. in~\cite{ Xiang2012, Xiang2011} improved it when the particular dual point ${\obs}$ is used.
We propose here a homogenized formulation relying on any dual point $\dvar$ for each of the three screening tests,
generalizing~\cite{Xiang2012, Xiang2011} to any dual point, thereby fitting them for use in a dynamic setting.
We present these screening tests through the following Lemmata, in order of increasing description complexity and screening efficiency.
For more details on the construction of regions $\mathcal{R}$ and the solution of the maximization problem
please see the references or proofs in Appendix.

\begin{lem}[The $\safe{}$ screening test~\cite{ElGhaoui2010}]\label{thm:SAFE}
For any ${ \dvar \in \R^{\dimS} }$, the following function $\ST^{\safe{}}_{\dvar}$ is a screening test for $\mathcal{P}^\text{\lasso}(\lambda, \D ,\obs)$: 
\begin{align}
 \ST^{\safe{}}_{\dvar} : \fullset &\rightarrow \{0,1\}   \nonumber \\
 i & \mapsto \left \llbracket (1-\abs{\atom_{i}^T\sphcent} )> \sphradf{\dvar} \right \rrbracket \label{eqn:safe}
\end{align}
where $\sphcent \eqdef \frac{\obs}{ \lambda }$, $\sphradf{\dvar} \eqdef \normII{\frac{\obs}{\pen} - \mu \dvar}$
 and $\mu \eqdef \left[\frac{\dvar^T\obs}{\lambda \|\dvar \|^2_2}\right]^{\|\D^T \dvar \|_{\infty}^{-1}}_{-\|\D^T \dvar \|_{\infty}^{-1}}$.
\end{lem}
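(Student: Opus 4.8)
The plan is to follow the geometric recipe described just before the statement: exhibit a Euclidean ball $\ellips$ that provably contains the dual optimum $\dvaropt$, bound $\max_{\feas \in \ellips}\abs{\atom_i^T\feas}$ in closed form, and then use the optimality condition~\eqref{eqn:primaldual} to turn the bound into a certificate of inactivity. Concretely, I want to show the implication chain
\[
(1-\abs{\atom_i^T\sphcent})>\sphradf{\dvar}
\;\Longrightarrow\;
\abs{\atom_i^T\dvaropt}<1
\;\Longrightarrow\;
\coef{\pvaropt}{i}=0,
\]
where the center is $\sphcent=\obs/\lambda$ and $\ellips\eqdef\{\feas\in\R^{\dimS}:\normII{\feas-\sphcent}\le\sphradf{\dvar}\}$. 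The last implication is immediate: by~\eqref{eqn:primaldual}, $\coef{\pvaropt}{i}\neq0$ forces $\abs{\atom_i^T\dvaropt}=1$, so the contrapositive gives the claim.

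The crux is the first implication, and in particular proving $\dvaropt\in\ellips$. First I would observe that, dropping the additive constant $\tfrac12\normII{\obs}^2$, maximizing the dual objective~\eqref{eqn:dual_unconstr} is the same as minimizing $\normII{\feas-\obs/\lambda}^2$ over the constraint set~\eqref{eqn:dual_constr}. Since that set $\feasset$ is an intersection of slabs $\abs{\feas^T\atom_i}\le1$, hence nonempty (it contains $0$), closed and convex, the minimizer exists and is unique: $\dvaropt$ is exactly the Euclidean projection of $\sphcent=\obs/\lambda$ onto $\feasset$. This is the one genuinely conceptual step; everything after it is routine.

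Next I would check that $\mu\dvar$ is feasible, so that it may serve as a comparison point. The feasibility requirement $\abs{(\mu\dvar)^T\atom_i}\le1$ for all $i\in\fullset$ reads $\abs{\mu}\,\normInf{\D^T\dvar}\le1$, i.e. $\mu\in[-\normInf{\D^T\dvar}^{-1},\,\normInf{\D^T\dvar}^{-1}]$; the clamping in the definition of $\mu$ guarantees precisely this membership, so $\mu\dvar\in\feasset$. (I would also note, to justify that the test is as tight as possible, that the unconstrained minimizer of $\mu\mapsto\normII{\obs/\lambda-\mu\dvar}$ is $\dvar^T\obs/(\lambda\normII{\dvar}^2)$, and clamping a convex quadratic's minimizer to an interval yields the constrained minimizer — hence $\mu$ minimizes the radius over all feasible rescalings of $\dvar$; this is not needed for validity, only for efficiency.) Because projection onto a convex set returns the closest feasible point,
\[
\normII{\dvaropt-\sphcent}
\;\le\;\normII{\mu\dvar-\sphcent}
\;=\;\normII{\tfrac{\obs}{\pen}-\mu\dvar}
\;=\;\sphradf{\dvar},
\]
which is exactly $\dvaropt\in\ellips$.

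Finally I would compute the closed-form bound over the ball. For any $\feas\in\ellips$, writing $\atom_i^T\feas=\atom_i^T\sphcent+\atom_i^T(\feas-\sphcent)$ and applying Cauchy--Schwarz together with the unit-norm normalization $\normII{\atom_i}=1$ gives $\abs{\atom_i^T\feas}\le\abs{\atom_i^T\sphcent}+\sphradf{\dvar}$, a bound attained at $\feas=\sphcent\pm\sphradf{\dvar}\atom_i$. Applying this to $\feas=\dvaropt$ yields $\abs{\atom_i^T\dvaropt}\le\abs{\atom_i^T\sphcent}+\sphradf{\dvar}$. Thus when the test fires, $\abs{\atom_i^T\sphcent}+\sphradf{\dvar}<1$, whence $\abs{\atom_i^T\dvaropt}<1$ and $\coef{\pvaropt}{i}=0$, completing the argument. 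I expect the projection characterization of $\dvaropt$ to be the main obstacle in the sense that it is the only non-mechanical insight; once it is in place, feasibility of $\mu\dvar$, the radius inequality, and the spherical maximization are all short and standard.
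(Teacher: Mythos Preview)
Your proposal is correct and follows essentially the same route as the paper: feasibility of $\mu\dvar$ via dual scaling, the observation that $\dvaropt$ minimizes $\normII{\cdot-\obs/\lambda}$ over the feasible set so that $\normII{\dvaropt-\sphcent}\le\normII{\mu\dvar-\sphcent}=\sphradf{\dvar}$, and the Cauchy--Schwarz/triangle-inequality bound over the resulting sphere combined with~\eqref{eqn:primaldual}. The only cosmetic difference is that the paper proves the \glasso version first (Lemmata~\ref{thm:sphere_test_glasso}--\ref{thm:dualscaling_glasso} and Lemma~\ref{thm:GSAFE}) and obtains Lemma~\ref{thm:SAFE} as the size-one-group specialization, whereas you argue directly for the \lasso.
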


The notation $\llbracket \mathrm{P} \rrbracket$ in~\eqref{eqn:safe} means that we take the boolean value of the proposition $\mathrm{P}$
as the output of the screening test.
We also recall that ${ [r]_{a}^b \eqdef  \max(\min(r,b),a) }$ denotes the projection of $r$ onto the segment
$[a,b]$. Lemma~\ref{thm:SAFE} is exactly El Ghaoui's SAFE test.

The screening test ST3~\cite{Xiang2011} is a much more efficient test than SAFE, especially when $\lambda_*$
is high. We extend it in the following Lemma so that it can be used for dynamic screening.

\begin{lem}[The Dynamic ST3: $\hsst{}$]\label{thm:HSST}
For any ${\dvar \in \R^{\dimS} }$, the following function $\ST^{\hsst{}}_{\dvar}$ is a screening test for $\mathcal{P}^\text{\lasso}(\lambda, \D ,\obs)$: 
\begin{align}
 \ST^{\hsst{}}_{\dvar} : \fullset &\rightarrow \{0,1\}   \nonumber \\
 i & \mapsto \left \llbracket ( 1-\abs{\atom_{i}^T\sphcent} ) > \sphradf{\dvar} \right  \rrbracket
 \label{eqn:DST3}
\end{align}
where $\sphcent \eqdef \frac{\obs}{ \lambda }- \left (\frac{\lambda_*}{\lambda}-1\right ) \atom_*$, 
$\sphradf{\dvar} \eqdef \sqrt{ \normII{ \mu_{} \dvar -\frac{\obs}{ \lambda } }^2- 
\left (\frac{\lambda_*}{\lambda}-1\right )^2}$,
${\mu \eqdef \left[\frac{\dvar^T\obs}{\lambda \|\dvar \|^2_2}\right]^{\|\D^T \dvar \|_{\infty}^{-1}}_{-\|\D^T \dvar \|_{\infty}^{-1}}}$ 
\\
and $\atom_* \eqdef \argmax{\atom\in \{ \pm \atom_i \}_{i=1}^K}  \atom^T\obs$.
\end{lem}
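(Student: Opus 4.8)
The plan is to prove that $\dvaropt$, the unique solution of the dual problem~\eqref{eqn:dual}, lies in the Euclidean ball $\ellips \eqdef \{\dvar : \normII{\dvar - \sphcent} \leq \sphradf{\dvar}\}$, and then to invoke the standard sphere-screening argument. Indeed, once the containment $\dvaropt \in \ellips$ is established, the unit-norm assumption on the atoms gives, for every $i \in \fullset$,
\begin{equation*}
\abs{\atom_i^T \dvaropt} \leq \max_{\dvar \in \ellips} \abs{\atom_i^T \dvar} = \abs{\atom_i^T \sphcent} + \sphradf{\dvar}\,\normII{\atom_i} = \abs{\atom_i^T \sphcent} + \sphradf{\dvar},
\end{equation*}
so the test condition $1 - \abs{\atom_i^T \sphcent} > \sphradf{\dvar}$ of~\eqref{eqn:DST3} forces $\abs{\atom_i^T\dvaropt} < 1$, which by the optimality condition~\eqref{eqn:primaldual} yields $\coef{\pvaropt}{i} = 0$. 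Thus the whole statement reduces to the geometric containment $\dvaropt \in \ellips$.

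To obtain the containment I would combine two facts about $\dvaropt$. First, maximizing the dual objective~\eqref{eqn:dual_unconstr} over the feasible set is equivalent to projecting $\obs/\lambda$ onto the feasible polytope, so $\dvaropt$ is the feasible point closest to $\obs/\lambda$. Setting $\dvaru \eqdef \mu\dvar$, the clamping of $\mu$ to $[-\normInf{\D^T\dvar}^{-1}, \normInf{\D^T\dvar}^{-1}]$ ensures $\abs{\mu}\,\normInf{\D^T\dvar} \leq 1$, hence $\abs{\dvaru^T\atom_i} \leq 1$ for all $i$, i.e. $\dvaru$ is dual-feasible; consequently $\normII{\dvaropt - \obs/\lambda} \leq \normII{\dvaru - \obs/\lambda}$, which is exactly the SAFE ball of Lemma~\ref{thm:SAFE} with radius $R_0 \eqdef \normII{\mu\dvar - \obs/\lambda}$. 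Second, writing $\atom_* = \pm\atom_{i_*}$ for the maximizing signed atom, feasibility of $\dvaropt$ gives $\abs{\atom_*^T\dvaropt} = \abs{\atom_{i_*}^T\dvaropt} \leq 1$, hence in particular $\atom_*^T\dvaropt \leq 1$, while the definition of $\atom_*$ and $\lambda_* = \normInf{\D^T\obs}$ give $\atom_*^T\obs = \lambda_*$.

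The key computation is to intersect these two pieces of information. Writing $s \eqdef \frac{\lambda_*}{\lambda} - 1 \geq 0$ (recall $\lambda \leq \lambda_*$), so that $\sphcent = \obs/\lambda - s\,\atom_*$ and $\sphradf{\dvar}^2 = R_0^2 - s^2$, I would complete the square:
\begin{equation*}
\normII{\dvaropt - \sphcent}^2 = \normII{\dvaropt - \tfrac{\obs}{\lambda}}^2 + 2s\,\atom_*^T\!\left(\dvaropt - \tfrac{\obs}{\lambda}\right) + s^2\normII{\atom_*}^2.
\end{equation*}
Using $\normII{\dvaropt - \obs/\lambda}^2 \leq R_0^2$, the bound $\atom_*^T(\dvaropt - \obs/\lambda) = \atom_*^T\dvaropt - \lambda_*/\lambda \leq 1 - \lambda_*/\lambda = -s$ together with $s \geq 0$, and $\normII{\atom_*} = 1$, this collapses to $\normII{\dvaropt - \sphcent}^2 \leq R_0^2 - 2s^2 + s^2 = R_0^2 - s^2 = \sphradf{\dvar}^2$, i.e. $\dvaropt \in \ellips$. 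As a by-product, since the left-hand side is nonnegative this also shows $R_0^2 \geq s^2$, so the radius is real and no separate feasibility check is needed.

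The main obstacle is not the algebra but getting the geometry right: recognizing that the tightened sphere arises from intersecting the SAFE ball with the single active half-space $\{\dvar : \atom_*^T\dvar \leq 1\}$ associated with the most $\obs$-correlated atom, and that the clamped scalar $\mu$ is precisely the optimal feasible rescaling of the input direction $\dvar$, so that $\dvaru$ is both feasible and as close to $\obs/\lambda$ as possible along that direction. Once the half-space is correctly identified and the feasibility of $\dvaru = \mu\dvar$ is verified, the completing-the-square step is routine and the screening conclusion follows from~\eqref{eqn:primaldual} as above.
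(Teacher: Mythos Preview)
Your proof is correct and follows essentially the same approach as the paper: intersect the SAFE ball $\sph_{\obs/\lambda,\,R_0}$ with the half-space $\{\dvar:\atom_*^T\dvar\le 1\}$ coming from the most $\obs$-correlated atom, and bound the intersection by the smaller sphere $\sph_{\sphcent,\sphradf{\dvar}}$ via a completing-the-square computation. The only structural difference is that the paper obtains Lemma~\ref{thm:HSST} as the size-one-group specialization of the \glasso result (Lemma~\ref{thm:GHSST}), where the tangent hyperplane to the constraint ellipsoid at $\obs/\lambda_*$ plays the role of your half-space, whereas you argue directly in the \lasso setting; the underlying geometry and algebra are identical.
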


When applied with $\dvar = {\obs}$ this screening test is exactly the ST3~\cite{Xiang2011}. 
Further improvements have been proposed in the Dome test~\cite{Xiang2012} for which we also
 propose an extended version appropriate for dynamic screening.

\begin{lem}[The Dynamic Dome Test: $\hdst{}$]\label{thm:HDST}
For any ${ \dvar \in \R^{\dimS} }$, the following function $\ST^{\hdst{}}_{\dvar}$ is a screening test for $\mathcal{P}^\text{\lasso}(\lambda, \D ,\obs)$: 
\begin{align}
 \ST^{\hdst{}}_{\dvar} : \fullset &\rightarrow \{0,1\}   \nonumber \\
 i & \mapsto  \left \llbracket Q^l_{\dvar}l(\atom_*^T\atom_i) < \pvar^T \atom_i < Q^u_{\dvar}(\atom_*^T\atom_i) \right \rrbracket
\end{align}
where 
\begin{align}
Q^l_{\dvar}(t)& \eqdef
\begin{cases}
(\lambda_* - \pen)t - \pen +\pen \sphradf{\dvar} \sqrt{1-t^2}, & \text{if } t \leq \lambda_*\\
-(\pen - 1 + \pen/ \lambda_*), & \text{if } t > \lambda_*
\end{cases} \label{eqn:dome1}\\
Q^u_{\dvar}(t)&  \eqdef
\begin{cases}
(\pen - 1 + \pen/ \lambda_*), & \text{if } t < -\lambda_* \\
(\lambda_* - \pen)t + \pen -\pen \sphradf{\dvar} \sqrt{1-t^2}, & \text{if } t \geq -\lambda_*
\end{cases}\label{eqn:dome2}
\end{align}
 $\sphradf{\dvar} \eqdef \sqrt{ \normII{ \mu_{} \dvar -\frac{\obs}{ \lambda } }^2
 - \left (\frac{\lambda_*}{\lambda}-1\right )^2}$ , ${\mu \eqdef \left[\frac{\dvar^T\obs}{\lambda \|\dvar \|^2_2}\right]
 ^{\|\D^T \dvar \|_{\infty}^{-1}}_{-\|\D^T \dvar \|_{\infty}^{-1}}}$ and
  $\atom_* \eqdef \argmax{\atom\in \{ \pm \atom_i \}_{i=1}^K}  \atom^T\obs$.
\end{lem}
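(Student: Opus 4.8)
The plan is to follow the general screening recipe described just before Lemma~\ref{thm:SAFE}: exhibit a region $\mathcal{R}\subset\R^{\dimS}$ that is guaranteed to contain the dual optimum $\dvaropt$, and then show that the displayed double inequality on $\atom_i$ is equivalent to $\max_{\dvar\in\mathcal{R}}\abs{\atom_i^T\dvar}<1$. Once this is established, dual feasibility together with the optimality conditions~\eqref{eqn:primaldual} give $\abs{\atom_i^T\dvaropt}\leq\max_{\dvar\in\mathcal{R}}\abs{\atom_i^T\dvar}<1$, and the contrapositive of~\eqref{eqn:primaldual} forces $\coef{\pvaropt}{i}=0$, so $\ST^{\hdst{}}_{\dvar}$ is indeed a screening test. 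Everything therefore reduces to (i) identifying $\mathcal{R}$ and (ii) solving in closed form the two linear programs $\max_{\dvar\in\mathcal{R}}\atom_i^T\dvar$ and $\min_{\dvar\in\mathcal{R}}\atom_i^T\dvar$.

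For the region I would take the \emph{dome} obtained by intersecting two localizations of $\dvaropt$. The first is the SAFE ball: exactly as in the proof of Lemma~\ref{thm:SAFE}, the clamped scalar $\mu$ is chosen so that $\mu\dvar$ satisfies the dual constraints~\eqref{eqn:dual_constr}; since $\dvaropt$ maximizes the dual objective~\eqref{eqn:dual_unconstr} and $\mu\dvar$ is feasible, one has $\normII{\dvaropt-\obs/\lambda}\leq\normII{\mu\dvar-\obs/\lambda}$, i.e. $\dvaropt$ lies in the ball centered at $\obs/\lambda$ of radius $\normII{\mu\dvar-\obs/\lambda}$. The second localization is the half-space $\{\dvar:\atom_*^T\dvar\leq1\}$ coming from dual feasibility of the most correlated atom $\atom_*$; moreover, since $\atom_*$ attains $\lambda_*=\normInf{\D^T\obs}$ and $\lambda\leq\lambda_*$, this atom is active at the optimum, so in fact $\atom_*^T\dvaropt=1$. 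Projecting the ball onto this hyperplane reproduces precisely the center $\sphcent=\obs/\lambda-(\lambda_*/\lambda-1)\atom_*$ and the radius $\sphradf{\dvar}=\sqrt{\normII{\mu\dvar-\obs/\lambda}^2-(\lambda_*/\lambda-1)^2}$ appearing in the statement, which is also how the ST3 region of Lemma~\ref{thm:HSST} arises; the Dome test retains the half-space constraint rather than discarding it.

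It then remains to optimize the linear form $\atom_i^T\dvar$ over this dome. Writing $t\eqdef\atom_*^T\atom_i$ and decomposing $\atom_i$ into its component along $\atom_*$ and its orthogonal part, of norm $\sqrt{1-t^2}$ since $\normII{\atom_i}=1$, a KKT analysis shows that the optimizer sits on the spherical part of the dome whenever the resulting constraint on $t$ holds: this yields the smooth branches $(\lambda_*-\lambda)t\pm\bigl(\lambda-\lambda\sphradf{\dvar}\sqrt{1-t^2}\bigr)$ after substituting $\sphcent$, and it degenerates onto the flat cap otherwise, yielding the constant branches $\pm(\lambda-1+\lambda/\lambda_*)$, the switch occurring exactly at $t=\pm\lambda_*$. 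Rearranging $-1<\min_{\dvar\in\mathcal{R}}\atom_i^T\dvar$ and $\max_{\dvar\in\mathcal{R}}\atom_i^T\dvar<1$, and expressing the center through the correlations $\obs^T\atom_i$ and $\atom_*^T\atom_i=t$, turns the two inequalities into $Q^l_{\dvar}(t)<\obs^T\atom_i<Q^u_{\dvar}(t)$, i.e. the bounds $Q^l_{\dvar}$ and $Q^u_{\dvar}$ of~\eqref{eqn:dome1} and~\eqref{eqn:dome2}.

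The main obstacle is step (ii): the constrained optimization over the dome is a genuine case analysis, and care is needed to check that the active-constraint regimes line up with the thresholds $t=\pm\lambda_*$ and that both branches of $Q^l_{\dvar}$ and $Q^u_{\dvar}$ are reproduced exactly, signs and constant pieces included. A secondary but non-trivial ingredient is justifying that $\atom_*$ is active at the optimum, so that $\dvaropt$ genuinely lies on the bounding hyperplane and the radius $\sphradf{\dvar}$ is real, i.e. $\normII{\mu\dvar-\obs/\lambda}^2\geq(\lambda_*/\lambda-1)^2$; this is precisely what sharpens the Dome and ST3 regions beyond the plain SAFE ball of Lemma~\ref{thm:SAFE}.
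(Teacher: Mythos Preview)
Your approach is the right one and matches the construction underlying the static Dome test of Xiang et al.\ that the paper cites: intersect the SAFE ball centred at $\obs/\lambda$ with the half-space $\{\dvar:\atom_*^T\dvar\le 1\}$, then optimise the linear form $\atom_i^T\dvar$ over the resulting dome by a KKT case split on whether the spherical cap or the flat face is active. The paper itself does not spell out a self-contained proof of this lemma; in the appendix only Lemmata~\ref{thm:SAFE} and~\ref{thm:HSST} are treated (as the $\ell_1$ specialisations of Lemmata~\ref{thm:GSAFE} and~\ref{thm:GHSST}), and for the Dome test the paper simply records that the case $\dvar=\obs$ reduces to \cite{Xiang2012}. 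So your outline is in fact more detailed than what the paper provides, and the dynamic modification is exactly the one you describe: replace the fixed feasible point $\obs/\lambda_*$ by the dual-scaled point $\mu\dvar$ in the radius.

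One point to tighten: you assert that ``$\atom_*$ is active at the optimum, so $\atom_*^T\dvaropt=1$'', and you lean on this to argue that $\sphradf{\dvar}$ is real. That equality is \emph{not} guaranteed for all $\lambda\in(0,\lambda_*]$ (the most-correlated atom can leave the active set along the regularisation path), and fortunately it is not needed. Feasibility alone gives $\atom_*^T\dvaropt\le 1$, which already places $\dvaropt$ in the half-space; since the ball centre satisfies $\atom_*^T(\obs/\lambda)=\lambda_*/\lambda\ge 1$ and the ball contains $\dvaropt$, the ball necessarily meets the bounding hyperplane, whence $\normII{\mu\dvar-\obs/\lambda}^2\ge(\lambda_*/\lambda-1)^2$ and $\sphradf{\dvar}$ is well defined. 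With that fix your plan goes through.
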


When applied with $\dvar = {\obs}$ this screening test is exactly the Dome test~\cite{Xiang2012}. \\

Using these Lemmata at dual points $\dvar_{\it}$ obtained during iterations allows to progressively reduce the radius $\sphrad_{\dvar}$ of the considered 
regions ---sphere or dome--- and thus improves the screening capacity of the screening tests associated to these regions.
The effect of the radius appears clearly in~(\ref{eqn:safe},\ref{eqn:DST3},\ref{eqn:dome1},\ref{eqn:dome2}). 
Note that the choice of a new $\dvar$, for one of the previous screening tests $\ST_{\dvar}$, only acts on $\sphrad_{\dvar}$ 
the radius of the region and not on $\sphcent$ its center.

%
%
%
%
%

\subsubsection{Dynamic screening for the \glasso}\label{sec:dynscr_glasso}

The \lasso problem embodies the assumption that observation $\obs$ may be approximately represented in $\D$ by a 
sparse vector $\pvaropt$. When a particular structure of the data is known, we may additionally assume 
that, besides sparsity, the representation of $\obs$ in $\D$ fits this structure. Inducing the 
structure into $\pvaropt$ is exactly the goal of structured-sparsity
regularizers. Among those we focus on the \glasso regularization because the group-separability of its objective 
function~\eqref{eqn:group-lasso} particularly fits the screening framework.

\paragraph{The \glasso~\cite{Yuan2006}}

The \glasso is a sparse least-squares problem that assumes some group structure in the sparse solution, in the sense 
that there are groups of zero coefficients in the solution $\pvaropt$.
This structure, assumed to be known in advance, is characterized by 
$\G$, a known partition of $\fullset$, and $ \grwght_{g} > 0 $ the weights associated with each group $g 
\in \G$ (\eg $\grwght_g = \sqrt{ \card{g}}$).
Using the group-sparsity inducing regularization $\spreg( \pvar ) \eqdef \sum_{g \in \G} \grwght_g \normII{\grindex{\pvar}{g}}$, the \glasso is defined as:

\begin{align}
&\mathcal{P}^\textit{\glasso}(\lambda, \D, \G, \obs) :\nonumber\\ &\hspace{1cm}\argmin{ \pvar } 
\dfrac{1}{2}\normII{\D \pvar - \obs}^2 + \pen \sum_{g \in \G} \grwght_g \normII{\grindex{\pvar}{g}}. 
\label{eqn:group-lasso}
\end{align}

The proximal operator of the group sparsity regularization is the group soft-thresholding:
\begin{align}
\forall g \in \G, \prox{\glasso}{t}{\pvar_g} \eqdef  \max\left (0,\dfrac{\normII{\pvar_g}-t \grwght_g }{\normII{\pvar_g}}\right ) 
\pvar_g. \label{eqn:gr_softTh}
\end{align}

The dual of the \glasso problem~\eqref{eqn:group-lasso} is (see~\cite{Wang2012}):
\begin{subequations}\label{eqn:dual_glasso}
\begin{align} 
          \dvaropt \triangleq \argmax{\dvar} &\frac{1}{2}\normII{\obs}^2-\frac{\lambda^2}
          {2}\normII{\dvar-\frac{\obs}{\lambda}}^2  \label{eqn:dual_unconstr_glasso}    \\
         \text{ s.t.  } \forall g \in \G, &\dfrac{\normII{\grindex{\D}{g}^T\dvar}}{\grwght_g}\leq 1
 .\label{eqn:dual_constr_glasso}
\end{align}
\end{subequations}

A dual point $\dvar\in \R^{\dimS}$ is said \textit{feasible} for the \glasso if it satisfies constraints~\eqref{eqn:dual_constr_glasso}.


From the convex \emph{optimality conditions}, primal and dual optima are necessarily linked by:
\begin{align}
\obs = \D\pvaropt + \pen \dvaropt,
\tx{ and }
\forall g\in\G
\begin{cases}
\|{\grindex{\D}{g}^T\dvaropt} \|_2 \leq \grwght_g  &\tx{ if  } \grindex{\pvaropt}{g} = \myvec{0},\\
\|{\grindex{\D}{g}^T\dvaropt}\|_2 = \grwght_g &\tx{ if  } \grindex{\pvaropt}{g} \neq \myvec{0}.
\end{cases}\label{eqn:primaldualKKT_glasso}
\end{align}

We now adapt the definition of $\lambda_*$ to the \glasso so that it corresponds to the smallest regularization parameter resulting
into a zero solution of~\eqref{eqn:group-lasso}.
Let us define  
\begin{align}
g_* \eqdef \argmax{g}\dfrac{\normII{\grindex{\D}{g}^T\obs}}{\grwght_g}, \quad 
\lambda_* \eqdef \dfrac{\normII{\grindex{\D}{g_*}^T\obs}}{{\grwght_{g_*}}}.
\label{eqn:gstar}
\end{align}

As for the \lasso, if  $\pen > \lambda_*$ one may screen all the atoms and obtain $\pvaropt = \myvec{0}$. 
Hence for the \glasso setting, we focus on the non-trivial case $\pen \in \left]0,\lambda_*\right]$.

Instances of screening tests for the \glasso are presented in the sequel. 
We extend here the $\safe{}$ ~\cite{ElGhaoui2010} and the $\hsst{}$~\cite{Xiang2011} screening tests to the \glasso.
To our knowledge there are no published results on this extension to the \glasso. 

\paragraph{Screening tests for the \glasso}
As just previously for the \lasso, the quantity $\| {\grindex{\D}{g}^T \dvaropt}\|_2$ in relation~\eqref{eqn:primaldualKKT_glasso}
 is not known except if the problem is solved. Regions $\mathcal{R}$
containing the optimum $\dvaropt$ are considered to use the upper bound $\max_{\dvar \in \mathcal{R}} \|{\grindex{\D}{g}^T \dvar} \|_2$
to identify some inactive groups $g$ thanks to relation~\eqref{eqn:primaldualKKT_glasso}.
Please see the proofs in the appendix for details on the construction of the regions and the solution of the maximization problem.

For any index $i \in \fullset$, we denote by $\gf{i}$ the unique group $g \in \G$ that contains $i$.
The following Lemma extends the SAFE screening test to the \glasso:

\begin{lem}[The Group-SAFE: $\gsafe{}$]
\label{thm:GSAFE}
For any ${ \dvar \in \R^{\dimS} }$, the following function $\ST^{\gsafe{}}_{\dvar}$ is a screening test for $\mathcal{P}^\textit{\glasso}(\lambda, \D, \G, \obs)$: 
\begin{align}
\ST^{\gsafe{}}_{\dvar} : \fullset &\rightarrow \{0,1\}   \nonumber \\
 i & \mapsto  \left \llbracket  \left ( \dfrac{ \grwght_{\gf{i}} }{ \normM{\grindex{\D}{\gf{i}}} } 
 -  \dfrac{\normII{\grindex{\D}{\gf{i}}^T\sphcent}}{\normM{\grindex{\D}{\gf{i}}}} \right )
 >\sphradf{\dvar}\right \rrbracket \label{eqn:gsafe}
\end{align}
where
\begin{align*} 
 \sphcent \eqdef \dfrac{\obs}{\lambda},
 \sphradf{\dvar} \eqdef \normII{\dfrac{\obs}{\lambda} - \mu \dvar}\text{ and } 
 \mu \eqdef \left[\frac{\dvar^T\obs}{\lambda \|\dvar \|^2_2} \right] 
_{-\min\limits_{g \in \mathcal{G}}\frac{\grwght_g}{\normII{\grindex{\D}{g}^T \dvar}}}
^{\min\limits_{g \in \mathcal{G}}\frac{\grwght_g}{\normII{\grindex{\D}{g}^T \dvar}}}.
\end{align*}
\end{lem}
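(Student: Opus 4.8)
The plan is to exploit the group-separable optimality conditions~\eqref{eqn:primaldualKKT_glasso}: a group $g$ can be safely screened as soon as one certifies the strict inequality $\normII{\grindex{\D}{g}^T\dvaropt} < \grwght_g$, since by the contrapositive of~\eqref{eqn:primaldualKKT_glasso} this forces $\grindex{\pvaropt}{g} = \myvec{0}$ and in particular $\coef{\pvaropt}{i} = 0$ for every $i \in g$. As $\dvaropt$ is unknown, I would instead bound $\normII{\grindex{\D}{g}^T\dvar}$ uniformly over a sphere guaranteed to contain $\dvaropt$, following the SAFE construction. Concretely, for the group $g = \gf{i}$ the test will be declared true precisely when the worst-case value of $\normII{\grindex{\D}{g}^T\dvar}$ over that sphere stays below $\grwght_g$.

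First I would build the sphere. Given an arbitrary $\dvar$, I would consider the ray $\{\mu'\dvar : \mu' \in \R\}$ and note that $\mu'\dvar$ is dual-feasible for~\eqref{eqn:dual_constr_glasso} exactly when $|\mu'|\,\normII{\grindex{\D}{g}^T\dvar} \le \grwght_g$ for all $g \in \G$, i.e.\ when $|\mu'| \le \min_{g\in\G}\grwght_g/\normII{\grindex{\D}{g}^T\dvar}$. The scalar $\mu$ in the statement is the unconstrained maximizer $\dvar^T\obs/(\lambda\normII{\dvar}^2)$ of the dual objective~\eqref{eqn:dual_unconstr_glasso} along this ray, clipped to that feasibility interval; hence $\mu\dvar$ is feasible and, by concavity in $\mu'$, is the feasible point of the ray closest to $\obs/\lambda$. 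Since $\dvaropt$ maximizes the (strongly concave) objective~\eqref{eqn:dual_unconstr_glasso} over the whole feasible set, comparing its value at $\dvaropt$ with its value at the feasible point $\mu\dvar$ yields $\normII{\dvaropt - \obs/\lambda}^2 \le \normII{\mu\dvar - \obs/\lambda}^2 = \sphradf{\dvar}^2$. Thus $\dvaropt$ lies in the ball of center $\sphcent = \obs/\lambda$ and radius $\sphradf{\dvar}$.

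Next I would upper-bound the dual quantity over that ball. Writing any point of the ball as $\sphcent + \feas$ with $\normII{\feas} \le \sphradf{\dvar}$, the triangle inequality together with the definition of the spectral norm gives $\normII{\grindex{\D}{g}^T(\sphcent+\feas)} \le \normII{\grindex{\D}{g}^T\sphcent} + \normM{\grindex{\D}{g}}\,\sphradf{\dvar}$. Applying this to $\dvaropt$ in particular, the sufficient condition $\normII{\grindex{\D}{g}^T\sphcent} + \normM{\grindex{\D}{g}}\,\sphradf{\dvar} < \grwght_g$ guarantees $\normII{\grindex{\D}{g}^T\dvaropt} < \grwght_g$. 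Dividing through by $\normM{\grindex{\D}{g}}$ rearranges this into exactly the test~\eqref{eqn:gsafe} for $g=\gf{i}$, which therefore implies $\coef{\pvaropt}{i}=0$, establishing property~\eqref{prop:screenTest} and proving that $\ST^{\gsafe{}}_{\dvar}$ is a screening test.

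I expect the main obstacle to be pinning down the sphere rather than the closing norm estimate, which is a routine triangle-inequality bound. The delicate points are checking that the clipping interval for $\mu$ simultaneously encodes feasibility for \emph{every} group (so that $\mu\dvar$ is genuinely feasible and the objective comparison is legitimate), and handling the degenerate cases where some $\normII{\grindex{\D}{g}^T\dvar}$ vanishes, so that group contributes $+\infty$ to the bound on $|\mu'|$ and the minimum is taken over the remaining groups. One should also keep in mind that, unlike the scalar \lasso case of Lemma~\ref{thm:SAFE} where the triangle-inequality bound is exact, here the spectral-norm step may be loose; this is harmless, since only a valid upper bound on $\normII{\grindex{\D}{g}^T\dvaropt}$ is needed for the implication in~\eqref{prop:screenTest} to hold.
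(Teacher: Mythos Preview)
Your proposal is correct and follows essentially the same approach as the paper: construct a feasible point by dual scaling of $\dvar$ (the paper's Lemma~\ref{thm:dualscaling_glasso}), deduce that $\dvaropt$ lies in the sphere $\sph_{\obs/\lambda,\sphradf{\dvar}}$, and then apply the triangle-inequality/spectral-norm bound (the paper's Lemma~\ref{thm:sphere_test_glasso}). The only difference is organizational---the paper factors these two steps out as separate lemmas whereas you inline them---and you add a useful remark on degenerate groups that the paper omits.
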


The following Lemma extends the screening tests ST3~\cite{Xiang2011} and DST3 (see Lemma~\ref{thm:HSST}) to the \glasso.

\begin{lem}[The Dynamic Group ST3: $\ghsst{}$]\label{thm:GHSST}
For any ${ \dvar \in \R^{\dimS} }$, the following function $\ST^{\ghsst{}}_{\dvar}$ is a screening test for $\mathcal{P}^\textit{\glasso}(\lambda, \D, \G, \obs)$: 
\begin{align}
\ST^{\ghsst{}}_{\dvar} : \fullset &\rightarrow \{0,1\}   \nonumber \\
 i & \mapsto \left \llbracket \left ( \dfrac{ \grwght_{\gf{i}} }{ \normM{\grindex{\D}{\gf{i}}} } 
 - \dfrac{\normII{\grindex{\D}{\gf{i}}^T\sphcent}}{\normM{\grindex{\D}{\gf{i}}}} \right )
 >\sphradf{\dvar} \right \rrbracket \label{eqn:dgst3}
\end{align}
where
\begin{align*}
 \sphcent & \eqdef \left (\Id - \dfrac{\nvec\nvec^T}{\normII{\nvec}^2}\right )\dfrac{\obs}{\lambda} + \dfrac{\nvec}{\normII{\nvec}^2}	\grwght_{g_*}^2, \\
\nvec & \eqdef \grindex{\D}{g_*}\grindex{\D}{g_*}^T\dfrac{\obs}{\lambda_*}, \\
\sphradf{\dvar} & \eqdef \sqrt{\normII{\dfrac{\obs}{\lambda}- \mu \dvar}^2 - \normII{\dfrac{\obs}{\lambda} - \sphcent }^2} \text{ and } \\
 \mu & \eqdef \left[\frac{\dvar^T\obs}{\lambda \|\dvar \|^2_2} \right] 
_{-\min\limits_{g \in \mathcal{G}} \frac{\grwght_g}{\normII{\grindex{\D}{g}^T \dvar}}}
^{\min\limits_{g \in \mathcal{G}} \frac{\grwght_g}{\normII{\grindex{\D}{g}^T \dvar}}}.
\end{align*}
\end{lem}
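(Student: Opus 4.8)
The plan is to follow the same two-stage template used for the other screening tests of the paper: (i) exhibit a closed region $\mathcal{R}\subset\R^\dimS$ guaranteed to contain the dual optimum $\dvaropt$ of~\eqref{eqn:dual_glasso}, and (ii) bound $\normII{\grindex{\D}{g}^T\dvar}$ over $\mathcal{R}$ in closed form, so that $\max_{\dvar\in\mathcal{R}}\normII{\grindex{\D}{g}^T\dvar}<\grwght_g$ becomes, through the optimality conditions~\eqref{eqn:primaldualKKT_glasso}, a sufficient condition for group $\gf{i}$ to be inactive and hence for $\coef{\pvaropt}{i}=0$ (which is exactly the screening requirement of Definition~\ref{def:scrTest}). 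The novelty here, relative to the Group-SAFE test of Lemma~\ref{thm:GSAFE}, is that $\mathcal{R}$ is obtained by intersecting the Group-SAFE ball with an extra half-space and then re-enclosing the resulting spherical cap (dome) in a \emph{smaller} ball centered at $\sphcent$.

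First I would recall the sphere. For the stated $\mu$, the point $\mu\dvar$ is dual-feasible (the projection interval for $\mu$ is precisely $[-M,M]$ with $M=\min_{g}\grwght_g/\normII{\grindex{\D}{g}^T\dvar}$), so its objective value lower-bounds that of $\dvaropt$; since the dual objective~\eqref{eqn:dual_unconstr_glasso} is a decreasing function of $\normII{\dvar-\obs/\lambda}$, this yields $\normII{\dvaropt-\obs/\lambda}\le\normII{\mu\dvar-\obs/\lambda}$, exactly as in Lemma~\ref{thm:GSAFE}. Next I would add the half-space. Because $\dvaropt$ is dual-feasible it lies in the convex set $C\eqdef\{\dvar:\normII{\grindex{\D}{g_*}^T\dvar}\le\grwght_{g_*}\}$. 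By the definitions~\eqref{eqn:gstar} the point $\obs/\lambda_*$ lies on $\partial C$, since $\normII{\grindex{\D}{g_*}^T(\obs/\lambda_*)}=\grwght_{g_*}$; the gradient of $\tfrac12\normII{\grindex{\D}{g_*}^T\dvar}^2$ there is $\nvec=\grindex{\D}{g_*}\grindex{\D}{g_*}^T(\obs/\lambda_*)$, which is nonzero because $\grindex{\D}{g_*}^T(\obs/\lambda_*)$ lies in the row space of $\grindex{\D}{g_*}$. Convexity of $C$ then gives the supporting half-space $\nvec^T\dvar\le\nvec^T(\obs/\lambda_*)=\grwght_{g_*}^2$, which holds in particular at $\dvaropt$.

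Then I would combine the two constraints. A direct computation gives $\nvec^T(\obs/\lambda)=(\lambda_*/\lambda)\,\grwght_{g_*}^2>\grwght_{g_*}^2$ since $\lambda<\lambda_*$, so the ball center $\obs/\lambda$ sits strictly on the far side of the cutting hyperplane, and the feasible cap is genuinely the ``small'' piece. Projecting $\obs/\lambda$ onto the hyperplane $\{\nvec^T\dvar=\grwght_{g_*}^2\}$ produces exactly the stated
\[
\sphcent=\left(\Id-\frac{\nvec\nvec^T}{\normII{\nvec}^2}\right)\frac{\obs}{\lambda}+\frac{\nvec}{\normII{\nvec}^2}\grwght_{g_*}^2 ,
\]
at distance $\normII{\obs/\lambda-\sphcent}$ from the center. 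A short orthogonal-decomposition argument (write any cap point as $\sphcent$ plus a component along $\nvec$ and a component in the hyperplane, and use that the former is $\le0$ while the center lies at positive offset) shows the cap is contained in the ball of center $\sphcent$ and radius $\sqrt{\normII{\obs/\lambda-\mu\dvar}^2-\normII{\obs/\lambda-\sphcent}^2}=\sphradf{\dvar}$. Over this ball, $\normII{\grindex{\D}{g}^T\dvar}$ is bounded above by $\normII{\grindex{\D}{g}^T\sphcent}+\sphradf{\dvar}\normM{\grindex{\D}{g}}$; requiring this bound to be $<\grwght_g$ and dividing by $\normM{\grindex{\D}{g}}$ yields precisely~\eqref{eqn:dgst3}.

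The main obstacle, I expect, is the cap-in-ball containment of step~three and the bookkeeping that identifies $\sphcent$ and $\sphradf{\dvar}$ with the projected center and the reduced (Pythagorean) radius; the sign check $\nvec^T(\obs/\lambda)>\grwght_{g_*}^2$ is exactly what certifies that this re-enclosure is a strict improvement over the Group-SAFE ball. A secondary point requiring care is the supporting-hyperplane step: it is legitimate because it invokes only the single convex dual constraint attached to $g_*$ (valid for \emph{every} feasible dual point, hence for $\dvaropt$), and does not rely on $g_*$ actually being active at the current $\lambda$.
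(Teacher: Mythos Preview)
Your proposal is correct and follows essentially the same route as the paper: obtain the GSAFE ball from dual scaling, intersect with the supporting half-space of the $g_*$-constraint at $\obs/\lambda_*$ (via the gradient $\nvec$), project $\obs/\lambda$ onto that hyperplane to get $\sphcent$, use Pythagoras for the reduced radius, and finish with the sphere test bound $\normII{\grindex{\D}{g}^T\sphcent}+\sphradf{\dvar}\normM{\grindex{\D}{g}}$. Your orthogonal-decomposition sketch for the cap-in-ball containment is exactly the geometric content of the paper's explicit algebraic verification, and your extra care about $\nvec\neq 0$ and the sign check $\nvec^T(\obs/\lambda)\ge\grwght_{g_*}^2$ is well placed.
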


In these two Lemmata, the regions $\mathcal{R}$ used to define the screening tests are spheres and the effect 
of the radius $\sphrad_{\dvar}$ on the screening capacity is visible in~(\ref{eqn:gsafe}) and (\ref{eqn:dgst3}).

The proposed screening tests have been given for the \glasso formulation, but  can be readily
extended to the \emph{Overlapping} \glasso~\cite{Jacob2009Group} thanks to the 
replication trick.

\subsection{A turnkey instance}
\label{sec:all_in_one}
As a concrete instance of Algorithm~\ref{alg:GalDynScreenDetail}, we propose to focus on the 
\lasso problem solved by the combined use of ISTA and SAFE.
We compare the static screening with the dynamic screening, through implementations given in 
Algorithms~\ref{alg:static_ista} and~\ref{alg:dynamic_ista}, respectively.
The usual ISTA update appears at lines~\ref{inAlg:ISTA_static1} to~\ref{inAlg:ISTA_static2} in 
Algorithm~\ref{alg:static_ista} and lines~\ref{inAlg:ISTA_dynamic1} to~\ref{inAlg:ISTA_dynamic2} in 
Algorithm~\ref{alg:dynamic_ista}, where the step size $L_{\it}$ is set using the backtracking 
strategy as described in~\cite{Beck2009}. 
The remaining lines of the algorithm, dedicated to the screening process, are described separately in the following paragraphs.

\begin{figure}[tbh]
\begin{center}
\begin{tiny}
\begin{minipage}[t]{.47\textwidth}
\begin{algorithm}[H]
\caption{ISTA + \emph{Static} SAFE Screening}
\label{alg:static_ista}
\begin{algorithmic}[1]
\Require $\dict, \obs, \lambda, {\pvar}_0 \in \R^{\nAt} $
\State  ..........  \emph{Screening} ..........
\State $\scr \gets \left\{  i \in \fullset, \abs{\atom_{i}^T\obs} < \pen - 1 + \frac{\pen}{\lambda_*} \right\}$\label{inAlg:SAFE_static1}
\State $\D_{0} \gets \grindex{\D}{\comp{\scr}}, $\label{inAlg:SAFE_static2}
\State $\it \gets 1$
\While{stopping criterion on ${\pvar}_\it$}
 	\State ... \emph{ISTA update }.....
 	\State $\dvar_{\it} \gets \D_{0} \pvar_{\it-1} -\obs  $
 	\label{inAlg:ISTA_static1}
 	\State $ \pvary_{\it} \gets \D_{0}^T \dvar_{\it}$
 	\State $ \pvar_{\it} \gets \prox{\lasso}{\lambda/L_\it}{\pvar_{\it-1} - \frac{1}{L_{\it}}\pvary_{\it}}$
 	\label{inAlg:ISTA_static2}
	\State $\it \gets \it+1$ 
\EndWhile \\
\Return $\pvar_{\it}$
\end{algorithmic}
\end{algorithm}
\end{minipage}
\begin{minipage}[t]{.47\textwidth}
\begin{algorithm}[H]
\caption{ISTA + \emph{Dynamic} SAFE Screening}
\label{alg:dynamic_ista}
\begin{algorithmic}[1]
\Require $\dict, \obs, \lambda, {\pvar}_0 \in \R^{\nAt}$
\State $\scr_{0} \gets \emptyset$, $\sphrad_0 \gets +\infty, \D_{0} \gets \D$ 
\State $\it \gets 1$
\While{stopping criterion on ${\pvar}_\it$}
 	\State ....\emph{  ISTA update  }.....
 	\State $\dvar_{\it} \gets \D_{\it-1} \bar \pvar_{\it-1} -\obs  $
 	\label{inAlg:ISTA_dynamic1}
 	\State $ \pvary_{\it} \gets \D_{\it-1}^T \dvar_{\it}$
 	\State $  \pvar_{\it } \gets \prox{\lasso}{\lambda/L_\it}{ \bar \pvar_{\it-1} - \frac{1}{L_{\it}}\pvary_{\it}}$
 	\label{inAlg:ISTA_dynamic2}
	\State ............ \emph{Screening} ............
	\State $\mu_{\it} \gets \left[ \frac{\dvar_{\it}^T\obs}{\lambda \|\dvar_{\it} \|^2_2}  \right]_{-\|\pvary_{\it} \|^{-1}_{\infty}}^{\|\pvary_{\it} \|^{-1}_\infty}$ \label{inAlg:SAFE_dynamic1}
%
	\State $\feas_{\it} \gets \mu_\it {\dvar_{\it}}$
	\State $\sphrad_{\it} \gets \normII{\frac{\obs}{\lambda} - \feas_{\it}}$ \label{inAlg:SAFE_dynamic_sphrad}
	\State $\scr_{\it} \gets \left\{  i \in \fullset, 
			\abs{\atom_{i}^T\obs} < \pen (1-\sphrad_{\it} )\right\} \cup \scr_{\it-1}$
			\label{inAlg:SAFE_dynamic_screenset}
	\State $\D_{\it} \gets  \D_{\it-1} \grindex{\Id}{\comp{\scr}_{\it-1},\comp{\scr}_{\it} }$
	 \label{inAlg:SAFE_dynamic2}	
	\State $ \bar \pvar_{\it} \gets \grindex{\Id}{\comp{\scr}_{\it},\comp{\scr}_{\it-1} } \pvar_{\it }$
	\label{inAlg:screenPrimalISTA}
	\State $\it \gets \it+1$ 
\EndWhile\\
\Return $\pvar_{\it}$
\end{algorithmic}
\end{algorithm}
\end{minipage}
\end{tiny}
\end{center} 
\end{figure}

The state-of-the-art static screening shown in 
Algorithm~\ref{alg:static_ista} is the successive use of the SAFE 
screening test ---Lemma~\ref{thm:SAFE} with $\dvar = \obs$ results exactly in
lines~\ref{inAlg:SAFE_static1}-\ref{inAlg:SAFE_static2}--- \emph{prior} to the ISTA algorithm.  
The dictionary is screened once for all, using information from the initially-available data 
$\D^T\obs$ and $\pen$.

The proposed dynamic screening principle is shown in Algorithm~\ref{alg:dynamic_ista}. 
The iteration is here composed of two stages:
a) the ISTA update (lines~\ref{inAlg:ISTA_dynamic1}-\ref{inAlg:ISTA_dynamic2}) which 
is exactly the same as lines~\ref{inAlg:ISTA_static1}-\ref{inAlg:ISTA_static2} of 
Algorithm~\ref{alg:static_ista} except that the dictionary changes along iterations and b)
 the screening step 
(lines~\ref{inAlg:SAFE_dynamic1}-\ref{inAlg:SAFE_dynamic2}) 
which aims at reducing the dictionary size thanks to the information
contained in the current iterates $\dvar_{\it}$ and $\pvary_{\it}$. 
The screening process appears at 
lines~\ref{inAlg:SAFE_dynamic_sphrad}-\ref{inAlg:SAFE_dynamic2} where 
the index sets 
${\scr}_{\it}$ of screened atoms form a non-decreasing inclusion-wise sequence (line~\ref{inAlg:SAFE_dynamic_screenset}).


\subsection{Computational complexity of the dynamic screening}
\label{sec:optimandscreen_comput}

The screening test introduces only a negligible computational overhead because it mainly relies on the matrix-vector 
multiplications already performed in the first-order algorithm update. 
We present now the computational ingredients of the acceleration obtained by dynamic screening.

Algorithm~\ref{alg:GalDynScreenDetail} implements the iterative alternation of the update 
of any first-order algorithm ---\eg those from Table~\ref{tab:optimstep}--- at line~\ref{inAlg:optim}, and a 
screening process at line~\ref{inAlg:ScrTest}-\ref{inAlg:screenPrimal}.
This screening process consists of the pairing of two distinct stages. 
First the set $\scr_{\it}$ of screened atoms is computed at line~\ref{inAlg:ScrTest} using one of the Lemmata~\ref{thm:SAFE} to~\ref{thm:GHSST}.
Analyzing these Lemmata shows that the expensive computation required to 
evaluate the screening test $\ST_{\dvar_{\it}}(i)$ for all $i \in \fullset$ is both due to the products 
$\atom_{i}^T \sphcent$ for all $i \in \fullset$ and to the computation of the scalar $\mu$ which needs the product $\D_{\it}^T\dvar_{\it}$. 
Thus, determining a set of inactive atoms may cost $\mathcal{O}(\nAt 
\dimS)$ per iteration. Fortunately, the computation $\D^T\sphcent$ can be done once for all at 
the beginning of the algorithm. Table~\ref{tab:optimstep} shows that the computation 
$\D_{\it}^T \dvar_{\it}$ is already done by every first-order algorithm. So determining $\scr_{\it}$ 
produces an overhead of $\mathcal{O}(\card{\comp{\scr}_{\it-1}} + \dimS)$ only.
Second the proper screening operations reduce the size of the dictionary and the primal 
variables at lines~\ref{inAlg:screenDict} and~\ref{inAlg:screenPrimal}, with a small computation requirement 
because matrix $\grindex{\Id}{\comp{\scr}_{\it},\comp{\scr}_{\it-1}}$ has 
only $\card{\comp{\scr}_{\it}}$ non-zero elements. So, finally, the computation overhead entailed by the 
embedded screening test has complexity $\mathcal{O}(\card{\comp{\scr}_{\it-1}} + \dimS)$ at iteration 
$\it$ which is negligible compared with the complexity $\mathcal{O}(\card{\comp{\scr}_{\it-1}}  
\dimS)$ for the optimization update $\step(\cdot)$. A detailed complexity analysis is given in Section~\ref{sec:detailed_complexity}.
Finally the total computation cost of the algorithm with dynamic screening may be much smaller than 
the base first-order algorithm. 
This is evaluated experimentally in Section~\ref{sec:simulations}.

\section{Experiments}

\label{sec:simulations}
This section is dedicated to experiments made to assess the practical 
relevance of the proposed dynamic screening principle\footnote{The code 
in Python and data are released for reproducible research purposes at 
\url{http://pageperso.lif.univ-mrs.fr/~antoine.bonnefoy}.}. More precisely, 
we aim at providing a rich understanding of its properties beyond what the 
theory can demonstrate. The questions of interest deal with the computational 
performance and may be formulated as follows:

\begin{itemize}
 \item how to measure and evaluate the benefits of dynamic screening?
 \item what is the efficiency of dynamic screening in terms of the overall
 		acceleration compared to the algorithm without screening or with static screening?
 \item to which extent does the computational gain depend on problems, 
 		algorithms, synthetic and real data, screening tests?
\end{itemize}

\subsection{How to evaluate: performance measures} \label{sec:detailed_complexity}
Let us first notice from Theorem~\ref{thm:DynScrCvg} that whatever strategy is used
---no-screening/static screening/dynamic screening---, the algorithms converge 
to the same optimal $\pvaropt$. Consequently, there is no need to evaluate the \textit{quality} 
of the solution and we shall only focus on \textit{computational aspects}.

The main figure of merit that we use is based on an estimation of the number
of floating-point operations (flops) required by the algorithms with no screening
($\flops{N}$), with static screening ($\flops{S}$) and with dynamic screening
($\flops{D}$) for a complete run. We will represent experimental results by
the normalized number of flops $\frac{\flops{S}}{\flops{N}}$ and
$\frac{\flops{D}}{\flops{N}}$ that reflect the acceleration obtained
respectively by the static screening and dynamic screening strategies over the base
algorithm with no screening. Computing such quantities requires to experimentally
record the number of iterations $\it_{f}$ and for each iteration $\it$, the size of the
dictionary $\card{\comp{\scr}_\it}$ and the sparsity of the current iterate $\normO{\pvar_{\it}}$.
They are defined for the \lasso as:
\begin{center}
\begin{footnotesize}
\begin{tabular}{|c|c|} 
\hline $\flops{N}$
 &$ \sum_{\it=1}^{\it_{f}} \left[ (\nAt+\normO{\pvar_{\it}}) \dimS 
 + 4 \nAt+ \dimS \right]$  \\  
$\flops{S}$
& $ \nAt  \dimS+\sum_{\it=1}^{\it_{f}} \left[( \card{\comp{\scr}_0} 
+\normO{\pvar_{\it}})\dimS + 4 \card{\comp{\scr}_0}+ \dimS\right]$  \\  
$\flops{D}$ 
& $\sum_{\it=1}^{\it_{f}} \left[( \card{\comp{\scr}_\it} +\normO{\pvar_{\it}}) 
 \dimS + 6 \card{\comp{\scr}_\it}+ 5\dimS\right]$  \\ 
\hline 
\end{tabular} 
\end{footnotesize}
\end{center}
and for the \glasso as:
\begin{center}
\begin{footnotesize} 
\begin{tabular}{|c|c|} 
\hline $\flops{N}$ 
&$ \sum_{\it=1}^{\it_{f}} \left[(\nAt+\normO{\pvar_{\it}}) \dimS 
+ 4 \nAt+ \dimS + 3\card{\G} \right]$  \\  
$\flops{S} $
& $ \nAt  \dimS+\sum_{\it=1}^{\it_{f}} \left[( \card{\comp{\scr}_0} 
+\normO{\pvar_{\it}})\dimS + 4 \card{\comp{\scr}_0}+ \dimS + 3\card{\G}\right]$  \\  
$\flops{D}$ 
& $\sum_{\it=1}^{\it_{f}} \left[( \card{\comp{\scr}_{\it}} +\normO{\pvar_{\it}})  \dimS 
+ 7 \card{\comp{\scr}_\it}+ 5\dimS + 5 \card{\G}\right]$  \\ 
\hline 
\end{tabular} 
\end{footnotesize}
\end{center}

Indeed, one update of a first-order algorithm at iteration $\it$ requires at least
$2 \card{\comp{\scr}_{\it}} \dimS + \card{\comp{\scr}_{\it}} + \dimS$ to compute the gradient, and
 the proximal operator of the \lasso and \glasso need,  $3 \card{\comp{\scr}_{\it}}$ and $3 \card{\comp{\scr}_{\it}} + 3\card{\G}$ operations, respectively (see Table~\ref{tab:optimstep}, \eqref{eqn:softTh} and~\eqref{eqn:gr_softTh}).
The dynamic screening requires the computation of $\mu$: $2\dimS + \card{\comp{\scr}_{\it}}$ and $2\dimS + 2\card{\comp{\scr}_{\it}} + 2\card{\G}$ for \lasso and \glasso respectively, (see Lemma~\ref{thm:SAFE}-\ref{thm:GHSST}), and $2 \dimS$ for the computation of the 
$\sphrad_{\dvar}$. The screening step is then computed in $\card{\comp{\scr}_{\it}}$ operations.
The static screening approach implies a separated initialization of the screening test which requires $\nAt  \dimS$ operations.

Note that the primal variable which would be sparse during the optimization procedure which reduce the 
number of operation required for $\D_{\it}\pvar_{\it}$ from $\card{\comp{\scr}_{\it}}\dimS$ to 
$\normO{\pvar_{\it}} \dimS$. Note also that here we do not take into account the time required to compute the matrix norm of the sub-dictionaries corresponding to each group: $\norm{\grindex{\D}{g}}, g \in \G$. 
These quantities do not 
depend on the problem $\mathcal{P}(\pen, \spreg, \D, \obs)$ but on the dictionary and the groups 
themselves only, so we consider that they can be computed beforehand for a given dictionary D and a given partition G.

Another option to measure the computational gain consists in actual running times, which we consider as well.
The main advantage of this measure would be that it results from actual performance in seconds instead of estimated 
or asymptotic figure. However, running times depend on the implementation so that it may not be the right measure 
in the current context of algorithm design. 
For each screening strategy (no screening/static/dynamic) we measure the running times $t_{N}/t_{S}/t_{D}$.
The performance are then represented in terms of normalized running times $\frac{t_{D}}{t_{N}}$ and $\frac{t_{S}}{t_{N}}$.
Eventually, one may wonder whether those measures, flops and times, are somehow 
equivalent, which will be checked and discussed in Sections~\ref{sec:exp_lasso} and~\ref{sec:exp_gl}.

\subsection{Data material}
\label{sec:data_mater}

\subsubsection{Synthetic data}
For experiments on synthetic data, we used two types of dictionaries that are widely used in the state-of-the-art of sparse estimation and screening tests. 
The first one is a normalized Gaussian dictionary in which all atoms $\atom_i$ are drawn i.i.d. uniformly on the unit sphere, \eg, by normalizing realizations of $\mathcal{N}(\mathbf{0},\mathbf{Id}_N)$. 
The second one is the so-called Pnoise introduced in~\cite{Xiang2012}, for which all $\atom_i$ are drawn 
i.i.d. as $ \mathbf{e}_1 + 0.1 \kappa \mathbf{g}$ and normalized, where
 $\mathbf{g} \sim \mathcal{N}(\mathbf{0}, \mathbf{Id}_N)$, 
 $ \kappa \sim \mathcal{U}(0,1) $ and  $\mathbf{e}_1 \eqdef [ 1, 0, \ldots, 0 ]^T \in \R^\dimS$ is the first natural basis vector. 
 We set the data dimension to $N\triangleq 2000$ 
 and the number of atoms to $K\triangleq 10000$.
 
In the experiments on the \lasso, observations $\obs$ were drawn i.i.d. from the exact same distribution as the atoms of dictionaries 
described above. In experiments on the \glasso, all groups were built randomly with the same number of atoms in each group.
Observations were generated from a Bernouilli-Gaussian distribution: 
$\card{\G}$ independent draws of a Bernoulli distribution of parameter $p=0.05$ were used to determine for each group if it was active or not.
Then coefficients of active groups are drawn i.i.d. from a standard Gaussian distribution while they are set to zero in inactive groups. 
The observation $\obs$ was generated as the ($l_2$-normalized) sum of $\D \pvar$ and of a Gaussian noise such that the signal-to-noise ratio equals 20dB.

\subsubsection{Audio data} 
For experiments on real data we performed the estimation of the sparse 
representation of audio signals in a redundant Discrete Cosine Transform (DCT) 
dictionary, which is known to be adapted for audio data. Music and speech recordings were taken from the material of the 2008 Signal Separation Evaluation Campaign~\cite{Vincent2009}. We considered 30 observations $\obs$ with length $N=1024$ and sampling rate 16 kHz and the number of atoms is set to {$K\triangleq 10000$}. 

\subsubsection{Image data}
Experiments on the MNIST database~\cite{LeCun1998} have been performed too. 
The database is composed of images of $N \triangleq 28 \times 28 = 784$ pixels representing 
handwritten digits from 0 to 9 and is split into a training set and a testing set.
The dictionary $\D$  is composed of $K \triangleq 10000$ vectorized images from the training set, 
with $1000$ randomly-chosen images for each digit.
Observations were taken randomly in the test set.

\subsection{Solving the \lasso with several algorithms.}
\label{sec:exp_lasso}

We addressed the \lasso problem with four different algorithms from Table~\ref{tab:optimstep}: 
ISTA, FISTA, SpaRSA and Chambolle-Pock. Algorithms stop at iteration $\it$ if either $\it > 200$ 
or the relative variation $\frac{\abs{ F(\pvar_{\it-1}) - F(\pvar_{\it})}}{F(\pvar_{\it})}$ of the objective 
function $F(\pvar)\triangleq \frac{1}{2}\normII{\D \pvar - \obs}^2 + \pen \spreg\left(\pvar\right)$ 
is lower than $10^{-7}$. We used a Pnoise dictionary and three different strategies for 
each algorithm: no screening, static ST3 screening and dynamic ST3 screening. Algorithms were 
run for several values of  $\lambda$ to assess the performance for various sparsity levels.


\begin{minipage}{0.45\textwidth} 
\begin{figure}[H]
\includegraphics[width=1. \textwidth]
{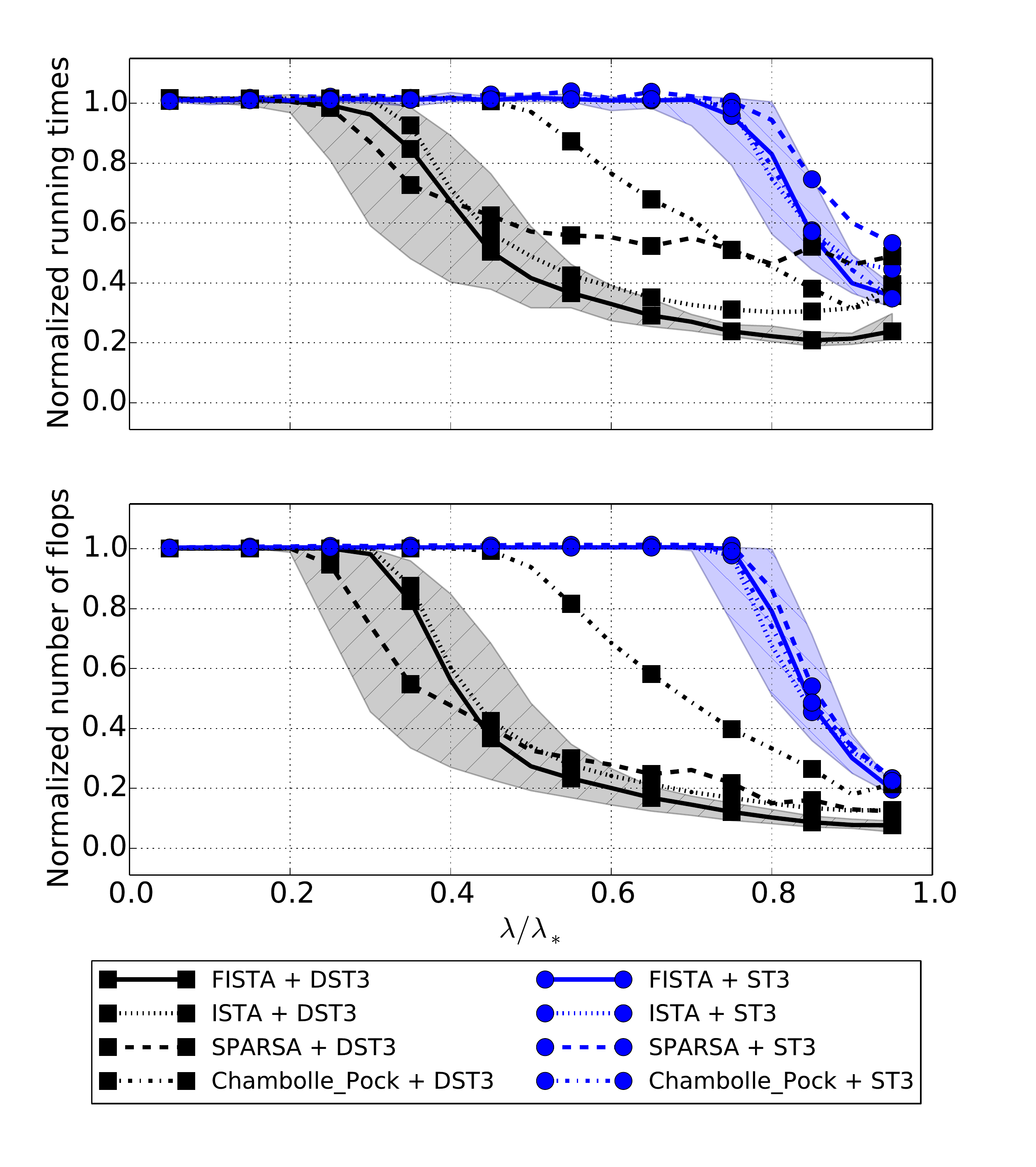} 
\caption{Normalized running times and normalized number of flops for solving the \lasso with a Pnoise dictionary.}
\label{fig:expeRelTime} 
\end{figure}
\end{minipage}
\hfill
\begin{minipage}{0.45\textwidth}
\begin{figure}[H]
\centering 
\includegraphics[width=.92 \textwidth]{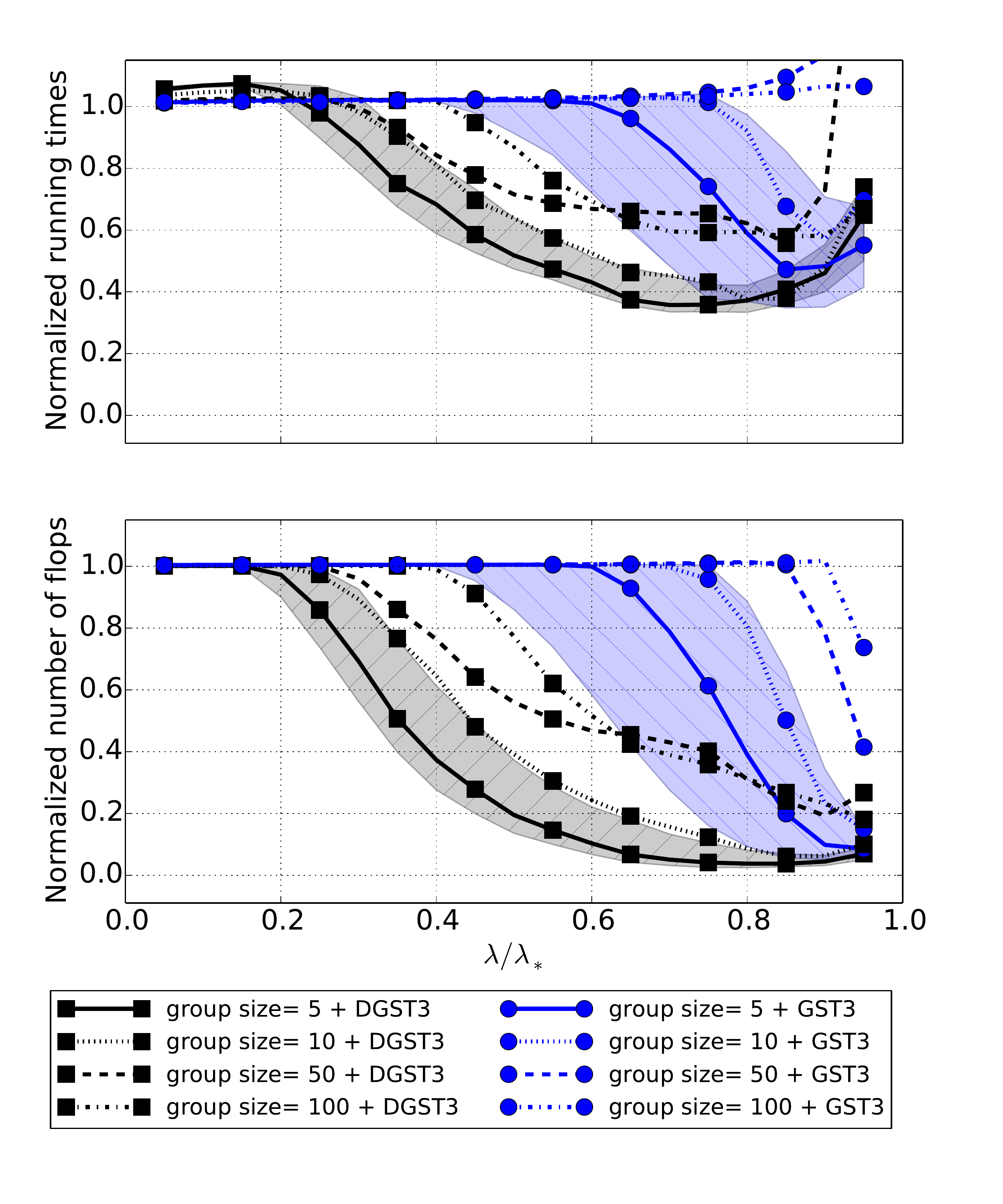} 
\caption{Normalized running times an number of flops for FISTA solving the \glasso with different group sizes (5, 10, 50, 100).}
\label{fig:multigroupsize}
\end{figure} 
\end{minipage}
\vspace{0.5cm}


 Figure~\ref{fig:expeRelTime} shows the normalized running  times and normalized number of flops
 for algorithms with dynamic screening (black squares)
 and for the corresponding algorithms with static screening (blue circle) as
 a function of ${\lambda}/{\lambda_*}$. Lower values account for faster computation. 
The medians among 30 runs are plotted and
the shaded area contains the 25\%-to-75\% percentiles for FISTA, in order to illustrate the typical distribution of the values 
(similar areas are observed for the other algorithms but are not reported for readability).

For all algorithms, the dynamic strategy shows a significant acceleration in a wide range of parameter $\lambda \geq 0.3\lambda_*$.
For $\lambda \geq 0.5\lambda_*$, computational savings reach about $75\%$ of the running time and $80\%$ of the number of flops. 
The static strategy is efficient in a much reduced range $\lambda \geq 0.8\lambda_*$, with lower computational gains.
Among all tested algorithms, FISTA has the largest ability to be accelerated, 
which is really interesting as it is also known to be very efficient in terms of 
convergence rate. 
 Note that due to the normalization of running times and flops, 
 Figure~\ref{fig:expeRelTime} cannot be used 
to draw any conclusion on which of ISTA, FISTA, SpaRSA or Chambolle-Pock 
is the fastest algorithm. Finally, one may observe that the running time and
flops measures have similar trends, supporting the idea that only one of them 
may be used to assess computational performance in a fair way.


\subsection{Solving the \glasso for various group sizes}
\label{sec:exp_gl}

We addressed the \glasso with FISTA using Pnoise data and dictionary as described in~\ref{sec:data_mater} 
with several group sizes.
In Figure~\ref{fig:multigroupsize}, the median normalized running times 
and number of flops over 30 runs are plotted, the shaded area representing 
the 25\%-to-75\% percentiles when the group size is 5.


The computational gains obtained by the dynamic screening strategy are 
of the same order than for the \lasso, with large savings in a wide range 
$\lambda \geq 0.3\lambda_*$. One may anticipate that when groups 
grow larger it is more difficult to locate some inactive groups, 
Figure~\ref{fig:multigroupsize} 
confirms this intuition: screening tests become less and less efficient 
to locate inactive groups and consequently the acceleration is not as 
efficient as in the \lasso problem.
As for the \lasso, running times and flops have similar trends, even 
if we observe a larger discrepancy. The discrepancy is due to implementation details. 
For instance the many loops on groups required for the computation of the screening 
tests are hard to handle efficiently in python.

\subsection{Comparing screening tests}

From the previous experiments, we retained 
FISTA to solve the \lasso on synthetic data with a Pnoise dictionary and a 
Gaussian dictionary, on real audio data, and on images. Results are reported 
in Figure~\ref{fig:multiscrtests} for all the proposed screening tests.

\begin{figure}[tbh]
\centering
\includegraphics[width=0.79\textwidth]
{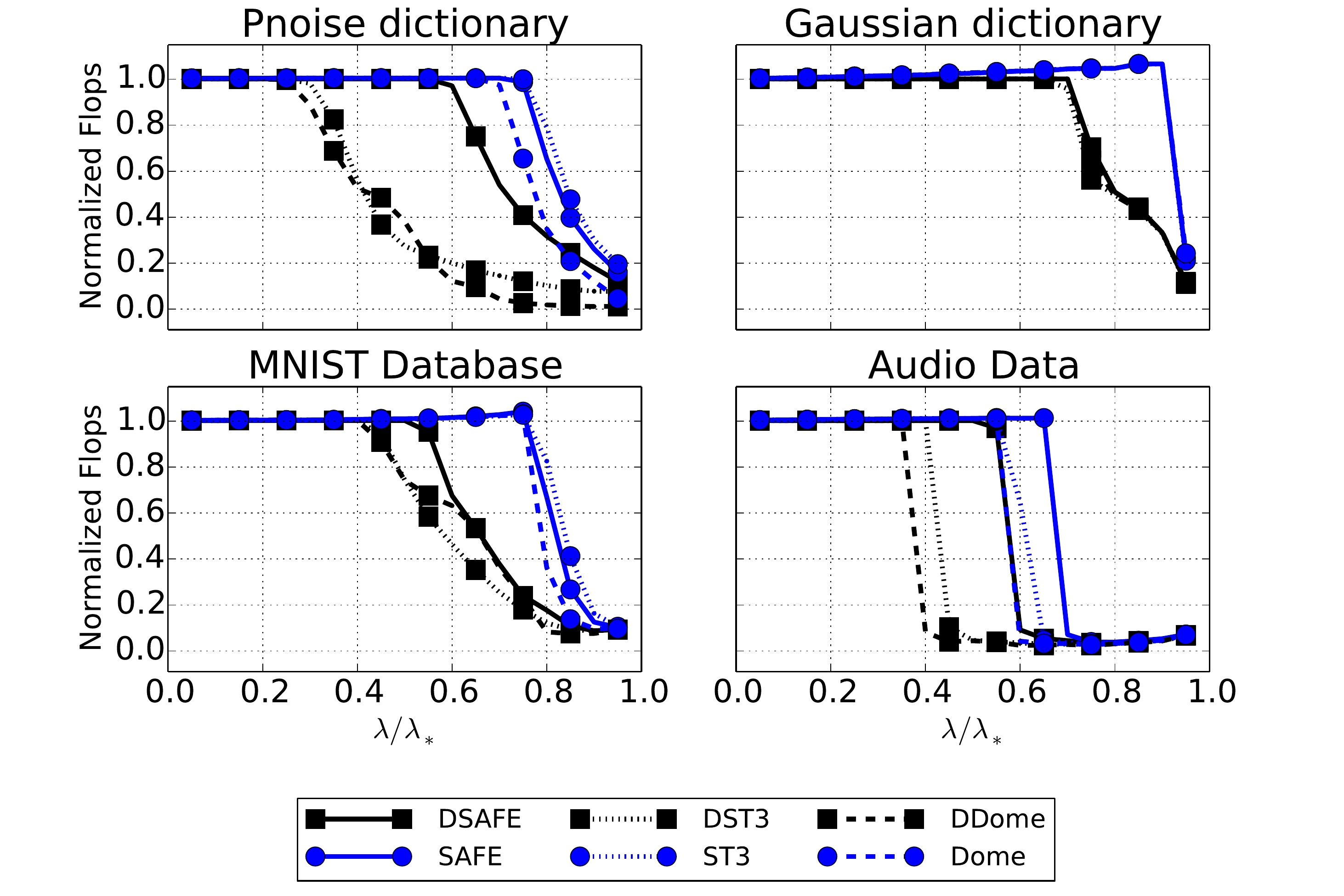} 
\caption{Computational gain of screening strategies for various data and screening tests, on the \lasso solved by FISTA.}
\label{fig:multiscrtests}
\end{figure}

For all kinds of data and all screening tests, dynamic screening again provides a large acceleration
on a wide range of $\lambda$ values and improves the static screening strategy. In the case of
 the Pnoise dictionary and of audio data, the ST3 and Dome tests bring in important improvement 
over the SAFE test, in the static and dynamic strategies. Indeed, $\lambda_*$ is close to 1 in these
cases so that the radius $\sphrad_{\dvar}$ in~\eqref{eqn:safe} for SAFE is much larger than 
in~\eqref{thm:HSST} for ST3 and in~\eqref{thm:HDST} for Dome, which degrades the screening 
efficiency of SAFE. This difference is even more visible when the dynamic screening strategy is used.
As a counterpart, the Gaussian dictionary have small correlation between atoms. 
In this dictionary, ST3 and Dome do not improve the performance of SAFE, but the dynamic 
strategy allows a higher acceleration ratio and for a larger range of parameter $\lambda$.

%
%
%
%
%


\section{Discussion}\label{sec:discussion}

We have proposed the dynamic screening principle and shown that this principle is relevant both theoretically and 
practically. When first-order algorithms are used,
dynamic screening induces stronger acceleration on the \lasso and \glasso solvings than static 
screening, and in a wider range of $\lambda$.

The convergence theorem (Theorem~\ref{thm:DynScrCvg}) makes very few assumptions on the iterative algorithm,
meaning that dynamic screening principle can be applied to many algorithms
 ---\eg second order algorithms.
Conversely, dynamic screening tests may produce different iterates than those of the base algorithm on which it is applied and hence 
may modify the convergence rate.
Can we ensure that the dynamic screening 
preserves the convergence rate of any first-order algorithm? Answering this question 
would definitely anchor dynamic screening in a theoretical context.
 
We presented here algorithms designed to compute the \lasso problem for a given $\pen$.
Departing from that, one might be willing
to compute the whole regularization path
as done by the \lars algorithm~\cite{Efron04lars}.
Thoroughly studying how screening might be combined with \lars is another
exciting subject that we plan to work on in a near future.

In a recent work~\cite{Wang13} Wang et. \textit{al} introduce a 
way to adapt the static dome test in a continuation strategy. 
This work relies on exact solutions of successive computation for higher $\pen$ parameters.
Iterative optimization algorithms do not give exact 
solutions hence examining how the dome test can be adapted dynamically in an iterative 
optimization procedure might be of great interest and lead to new approaches.

Given the nice theoretical and practical behavior of Orthogonal Matching
Pursuit~\cite{MallatZ93Matching,Tropp2004},
investigating how it can be paired with dynamic screening is a
pressing and exciting matter but poses the problem of
dealing with the non-convex $\ell_0$ regularization which prevents from 
using the theory and toolbox of convex optimality.

Lastly, as in~\cite{ElGhaoui2010}, we are curious to see how dynamic 
screening may show up when
other than an $\ell_2$ fit-to-data is studied: for example, this situation
naturally occurs when classification-based losses are considered. As
sparsity is often a desired feature for both efficiency (in the
prediction phase) and generalization purposes, being able to
work out well-founded results allowing dynamic screening is of the
utmost importance.

\bibliographystyle{IEEEtran}
\bibliography{IEEEabrv.bib,DynScreen.bib}
%

\appendices

\section*{Screening tests for the \glasso}
\label{sec:screen_analyse}
This section is dedicated to the proofs of the screening tests given in the 
Lemmata~\ref{thm:SAFE} to \ref{thm:GHSST}.

\begin{proof}[Proof of Lemmata 2 and 3]
Since the \lasso is a particular case of the \glasso, \ie groups of size one with $\grwght_g = 1$ for all $g \in \G$, 
Lemmata 2 and 3 are direct corollaries of Lemmata 5 and 6.
\end{proof}

\subsubsection*{Base concept}

Extending what has been proposed in~\cite{ElGhaoui2010, Xiang2011}
 we construct screening tests for \glasso using optimality conditions of the
 \glasso~\eqref{eqn:primaldualKKT_glasso}
jointly with the dual problem~\eqref{eqn:dual_glasso}.
These screening tests may locate inactive groups in $\G$.
According to \eqref{eqn:primaldualKKT_glasso}, groups $g$ such that 
${\|{\grindex{\D}{g}^T\dvaropt} \|_2 <\grwght_g}$ correspond to inactive groups
 which can be removed from $\D$.
The optimum $\dvaropt$ is not known but we can construct a region 
$\mathcal{R}\subset \R^N$ that contains $\dvaropt$ so that $\max_{\dvar \in \mathcal{R}} \| \grindex{\D}{g}^T \dvar \|_2$
gives a sufficient condition to screen groups: 
 \begin{align}
 \max_{\dvar \in \mathcal{R}}\normII{\grindex{\D}{g}^T \dvar} < \grwght_{g}
\label{eqn:gal_R_test_glasso} \Rightarrow \normII{\grindex{\D}{g}^T \dvaropt} < \grwght_{g}
\Rightarrow \grindex{\pvaropt}{g} = \myvec{0}
\end{align}



%


There is no general closed-form solution of the maximization problem
in~\eqref{eqn:gal_R_test_glasso} that would apply for arbitrary regions $\mathcal{R}$, 
moreover the quadratic nature of the maximization prevents closed-form 
solutions even for some simple regions $\mathcal{R}$.
We now present the instance of this concept when $\mathcal{R}$ is a sphere.
The sphere centered on $\sphcent$ with radius $\sphrad$ is denoted by $\sph_{\sphcent,\sphrad}$.

\subsubsection*{Sphere tests} 
Consider a sphere $\sph_{\sphcent,\sphrad}$ that contains the dual optimum $\opt\dvar$, 
the screening test associated with this sphere requires to solve $\max_ {\dvar \in \sph_{\sphcent,\sphrad}} \|{\grindex{\D}{g}^T\dvar} \|_2$ for each group $g$, which has no closed-form solution. 
Thus we use the triangle inequality to obtain a closed-form upper-bound on the solution: 
Lemma~\ref{thm:sphere_test_glasso} provides the corresponding sphere-test.

\begin{lem}[Sphere Test for~\glasso]
 \label{thm:sphere_test_glasso}
If $\sphrad\geq 0$ and $\sphcent \in\R^N$ are such that $\dvaropt \in \sph_{\sphcent,\sphrad}$, then the 
following function $\ST^\gsphere{}$ is a screening test for $\mathcal{P}_\glasso(\pen,\D,\G,\obs)$: 

\begin{align}
 \ST^{\gsphere} : \fullset &\rightarrow \{0,1\}   \nonumber \\
 i & \mapsto \left  \llbracket \left ( \dfrac{\grwght_{\gf{i}}}{\normM{\grindex{\D}{\gf{i}}^{ }}}
 -\dfrac{\normII{\grindex{\D}{\gf{i}}^T\sphcent}}{\normM{\grindex{\D}{\gf{i}}^{ }}} \right ) >\sphrad \right  \rrbracket
 \label{eqn:Gsphere}
\end{align}
\end{lem}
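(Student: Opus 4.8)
The plan is to deduce the statement from the sufficient screening condition~\eqref{eqn:gal_R_test_glasso}, already established for an arbitrary region $\mathcal{R}$, by specializing it to $\mathcal{R}=\sph_{\sphcent,\sphrad}$. Since $\dvaropt\in\sph_{\sphcent,\sphrad}$ by hypothesis, it suffices to show that whenever the test fires on an index $i$, the group $g=\gf{i}$ satisfies $\max_{\dvar\in\sph_{\sphcent,\sphrad}}\normII{\grindex{\D}{g}^T\dvar}<\grwght_{g}$. Indeed, feeding this into~\eqref{eqn:gal_R_test_glasso} gives $\normII{\grindex{\D}{g}^T\dvaropt}<\grwght_g$ and hence $\grindex{\pvaropt}{g}=\myvec 0$, so in particular $\coef{\pvaropt}{i}=0$, which is exactly the defining property~\eqref{prop:screenTest} of a screening test.

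First I would produce the closed-form upper bound on the maximization over the sphere announced in the text, using the triangle inequality. Writing any $\dvar\in\sph_{\sphcent,\sphrad}$ as $\dvar=\sphcent+\feas$ with $\normII{\feas}\leq\sphrad$, I get $\normII{\grindex{\D}{g}^T\dvar}\leq\normII{\grindex{\D}{g}^T\sphcent}+\normII{\grindex{\D}{g}^T\feas}$. The second term is controlled by the definition of the spectral norm as an operator norm, $\normII{\grindex{\D}{g}^T\feas}\leq\normM{\grindex{\D}{g}^T}\,\normII{\feas}\leq\normM{\grindex{\D}{g}}\,\sphrad$, where I use that a matrix and its transpose share the same largest singular value, i.e. $\normM{\grindex{\D}{g}^T}=\normM{\grindex{\D}{g}}$. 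Taking the supremum over the sphere then yields
\[
\max_{\dvar\in\sph_{\sphcent,\sphrad}}\normII{\grindex{\D}{g}^T\dvar}\;\leq\;\normII{\grindex{\D}{g}^T\sphcent}+\normM{\grindex{\D}{g}}\,\sphrad .
\]

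Finally I would simply rearrange the test condition to match this bound. If $\ST^{\gsphere}$ returns $1$ on $i$, then $\frac{\grwght_{g}}{\normM{\grindex{\D}{g}}}-\frac{\normII{\grindex{\D}{g}^T\sphcent}}{\normM{\grindex{\D}{g}}}>\sphrad$; multiplying through by $\normM{\grindex{\D}{g}}>0$ gives $\normII{\grindex{\D}{g}^T\sphcent}+\normM{\grindex{\D}{g}}\,\sphrad<\grwght_{g}$, which chained with the displayed upper bound produces $\max_{\dvar\in\sph_{\sphcent,\sphrad}}\normII{\grindex{\D}{g}^T\dvar}<\grwght_g$, closing the argument via~\eqref{eqn:gal_R_test_glasso}. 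There is no genuinely hard step here---the whole proof is a one-line triangle-inequality estimate---so the only points requiring care are identifying $\normM{\grindex{\D}{g}^T}=\normM{\grindex{\D}{g}}$ so the bound lines up with the denominator appearing in the test, and checking $\normM{\grindex{\D}{g}}>0$ so the division is legitimate, which holds since every atom is unit-norm and hence $\grindex{\D}{g}$ is nonzero.
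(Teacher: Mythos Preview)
Your proof is correct and follows essentially the same route as the paper: a triangle-inequality bound $\normII{\grindex{\D}{g}^T\dvar}\leq\normII{\grindex{\D}{g}^T\sphcent}+\sphrad\,\normM{\grindex{\D}{g}}$ over the sphere, then a comparison with the test threshold to invoke~\eqref{eqn:gal_R_test_glasso}. Your version is slightly more explicit about the side conditions ($\normM{\grindex{\D}{g}^T}=\normM{\grindex{\D}{g}}$ and $\normM{\grindex{\D}{g}}>0$), but the argument is the same.
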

\begin{proof}
Let $i \in \fullset$ such that $\ST^{\gsphere}\left(i\right)=1$ and $\sphrad\geq 0$ and $\sphcent \in\R^N$
 are such that $\dvaropt \in \sph_{\sphcent,\sphrad}$.
We use the triangle inequality to upper bound  $\max_{\dvar \in \sph_{\sphcent,\sphrad}}\| \grindex{\D}{g}^T\dvar \|_2$:

\begin{align*}
\max_{\dvar \in \sph_{\sphcent,\sphrad}} \normII{\grindex{\D}{\gf{i}}^T\dvar} &\leq \normII{\grindex{\D}{\gf{i}}^T\sphcent} + \max_{\dvar \in \sph_{\sphcent,\sphrad}} \normII{\grindex{\D}{\gf{i}}^T(\sphcent-\dvar)}\\
&\leq \normII{\grindex{\D}{\gf{i}}^T\sphcent} + \sphrad\normM{\grindex{\D}{\gf{i}}} 
\end{align*}
Which, as $\ST^{\gsphere}(i) = 1$ gives $\normII{\grindex{\D}{\gf{i}}^T \dvar} <\grwght_{\gf{i}} $.
Then using~\eqref{eqn:gal_R_test_glasso} we have that $\grindex{\pvaropt}{\gf{i}} = \myvec{0}$ and $\coef{\pvaropt}{i}=0$.
\end{proof}

\subsubsection*{Dynamic Construction of feasible point using Dual Scaling for the \glasso}


Before giving the proof of Lemmata~\ref{thm:GSAFE} and \ref{thm:GHSST}, we need to 
introduce the dual-scaling strategy that computes from any dual point $\dvar$, 
a feasible dual point that 
satisfies~\eqref{eqn:dual_constr_glasso} by definition 
and aim at being close to ${\obs}/{\lambda}$ to obtain an efficient screening test.
Proposed by El Ghaoui in~\cite{ElGhaoui2010} for the \lasso this method applied to the \glasso is given in the following Lemma:

\begin{lem}\label{thm:dualscaling_glasso}
Among all feasible scaled versions of $\dvar$,
the closest to ${\obs}/{\lambda}$ is $\feas = \mu \dvar$ where:
\begin{align}
\mu & \eqdef \left[\frac{\dvar^T\obs}{\lambda \|\dvar \|^2_2} \right] _{-s_{\min}}^{s_{\min}}
\text{ with } \quad s_{\min} \eqdef \min_{g \in \mathcal{G}}\dfrac{\grwght_g} {\normII{\grindex{\D}{\gf{i}}^T \dvar}} . \label{eqn:dualscaling_glasso}
\end{align}
\end{lem}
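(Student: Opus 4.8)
The plan is to reduce the statement to a one-dimensional constrained least-squares problem and solve it explicitly. A ``scaled version'' of $\dvar$ is a vector $\mu\dvar$ with $\mu \in \R$, and among those we must single out the feasible ones —in the sense of~\eqref{eqn:dual_constr_glasso}— that minimise the Euclidean distance to $\obs/\pen$. First I would determine exactly which values of $\mu$ yield a feasible point; then, over that set of admissible scalars, I would minimise the map $\mu \mapsto \normII{\mu\dvar - \obs/\pen}^2$.

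For the feasibility step, fix $g \in \G$. Constraint~\eqref{eqn:dual_constr_glasso} applied to $\mu\dvar$ reads $\normII{\grindex{\D}{g}^T(\mu\dvar)} \le \grwght_g$, that is $\abs{\mu}\,\normII{\grindex{\D}{g}^T\dvar} \le \grwght_g$. When $\normII{\grindex{\D}{g}^T\dvar} \neq 0$ this is equivalent to $\abs{\mu} \le \grwght_g / \normII{\grindex{\D}{g}^T\dvar}$, while groups with $\normII{\grindex{\D}{g}^T\dvar}=0$ impose no restriction. Intersecting these constraints over all groups shows that $\mu\dvar$ is feasible if and only if $\abs{\mu} \le s_{\min}$, with $s_{\min}$ as in~\eqref{eqn:dualscaling_glasso}; in other words the admissible scalars form the symmetric interval $[-s_{\min}, s_{\min}]$.

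For the minimisation step, I would expand
\begin{align*}
\normII{\mu\dvar - \tfrac{\obs}{\pen}}^2 = \normII{\dvar}^2 \mu^2 - 2\,\tfrac{\dvar^T\obs}{\pen}\,\mu + \tfrac{1}{\pen^2}\normII{\obs}^2,
\end{align*}
a strictly convex quadratic in $\mu$ (assuming $\dvar \neq \myvec{0}$) whose unique unconstrained minimiser is $\mu^\star = \dvar^T\obs/(\pen\normII{\dvar}^2)$. Minimising a convex function over an interval amounts to projecting the unconstrained minimiser onto that interval, so the optimal feasible scaling is $\mu = [\mu^\star]_{-s_{\min}}^{s_{\min}}$, which is precisely the quantity in~\eqref{eqn:dualscaling_glasso}; hence $\feas = \mu\dvar$ is the claimed closest feasible scaled point.

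The argument is elementary and I do not anticipate a genuine obstacle; the only points demanding care are the degenerate cases. If $\dvar = \myvec{0}$ the statement is vacuous, every scaling producing the same point. If $\normII{\grindex{\D}{g}^T\dvar}=0$ for some group, one must read the corresponding ratio as $+\infty$ so that it does not interfere with the minimum defining $s_{\min}$. With these conventions the projection formula holds verbatim.
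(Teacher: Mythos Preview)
Your argument is correct and follows essentially the same approach as the paper's proof: formulate the problem as the one-dimensional constrained minimisation $\argmin{s}\normII{s\dvar-\obs/\pen}$ subject to the group constraints, identify the feasible set as the segment $[-s_{\min},s_{\min}]$, and project the unconstrained minimiser onto it. You simply supply more detail (the quadratic expansion and the degenerate cases) than the paper does.
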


\begin{proof}
The dual-scaling problem for the \glasso is: 
\begin{align}
\mu \eqdef \argmin{s \in \R} \normII{s \dvar - \dfrac{\obs}{\pen}} \text{  s.t.  } \forall g \in \G, \normII{\grindex{\D}{g}^T s \dvar} < \grwght_{g}
\label{eqn:dualscal_prob_glasso}
\end{align}
The solution of \eqref{eqn:dualscal_prob_glasso} is the projection onto the feasible segment $[-s_{\min}, s_{\min}] \subset \R$
of the solution of  $\argmin{s \in \R} \normII{s \dvar - \dfrac{\obs}{\pen}}$.
This solution is given in \eqref{eqn:dualscaling_glasso}.
\end{proof}

%
%
We now prove the SAFE test for \glasso using Lemmata~\ref{thm:sphere_test_glasso} and~\ref{thm:dualscaling_glasso} following the same arguments as in \cite{ElGhaoui2010}.

\begin{proof}[Proof of Lemma~\ref{thm:GSAFE}]
Let $\dvar \in \R^\dimS$ and $\feas$ be its feasible scaled version obtained by dual-scaling (Lemma \ref{thm:dualscaling_glasso}).
Since ${\obs} / {\pen}$ attains the minimum of the unconstrained objective~\eqref{eqn:dual_unconstr_glasso} of
 the dual problem~\eqref{eqn:dual_glasso},  and since $\feas$ complies with  
all the constraints~\eqref{eqn:dual_constr_glasso}, the distance between the optimum $\dvaropt$ and ${\obs} / {\pen}$
 is upper bounded by $\normII{\dfrac{\obs}{\pen} - \feas}$, \ie, $\dvaropt\in\sph_{\frac{\obs}{\pen},\normII{\frac{\obs}{\pen} - \feas}}$.
And Lemma~\ref{thm:sphere_test_glasso} concludes the proof.
\end{proof}


\begin{proof}[Proof of Lemma~\ref{thm:GHSST}]
This proof is illustrated graphically in Figure~\ref{fig:ST3_glasso} in 2D.
We first construct geometrically objects that are involved in the proof. Recalling that
\begin{align}
g_* \eqdef \argmax{g}\dfrac{\normII{\grindex{\D}{g}^T\obs}}{\grwght_g} \text{ and } 
\lambda_* \eqdef \dfrac{\normII{\grindex{\D}{g_*}^T\obs}}{{\grwght_{g_*}}},
\end{align}
we define the set of dual point complying with the constraint associated with group $g_*$ as
\begin{align*}
\feasset \eqdef \left \{ \dvar \in \R^{\dimS}, \normII{ \grindex{\D}{g_*} \dvar } \leq \grwght_{g_*}  \right \}
\end{align*}
This set  $\feasset$ is the set contained in the ellipsoid: $${\ellips_* \eqdef \left \{ \dvar \in \R^{\dimS}, \normII{ \grindex{\D}{g_*} \dvar }^2 = \grwght_{g_*}^2  \right \} }.$$
Point $ {\obs} / {\lambda_*}$ is on the ellipsoid: we define $\nvec$ as a normal vector to the ellipsoid $\ellips_*$ at this point. 
Such a vector is built from the gradient of  ${ f\left(\dvar\right) \eqdef \dfrac{1}{2}  \normII{\grindex{\D}{g_*}^T \dvar}^2}$ at $\obs / \lambda_*$:
\begin{align}
\nvec \eqdef \nabla f \left(\dfrac{\obs}{\lambda_*}\right) = \grindex{\D}{g_*}\grindex{\D}{g_*}^T\dfrac{\obs}{\lambda_*}
\end{align}
We denote by $\mathcal{A}$ the half-space defined by  the hyperplane tangent to the ellipsoid at 
$ {\obs}/{\lambda_*}$ that contains $\feasset$:
$
{\mathcal{A} \eqdef \Big\{  \dvar \in \R^\dimS , {\dvar^T\nvec} \leq  \grwght_{g_*}^2  \Big\}.}
$

By construction $\feasset \subset \mathcal{A}$. 
We now construct the new sphere $\sph^\ghsst{}$ containing $\dvaropt$.
Let $\dvar \in \R^{\dimS}$ and $\feas$ the feasible scaled version of $\dvar$ obtained by dual-scaling.
We know $\dvaropt \in \feasset$ (since $\dvaropt$ is feasible) and $\dvaropt 
\in  \sph_{\frac{\obs}{\pen},\normII{\frac{\obs}{\pen} - \feas}}$ (see proof of 
Lemma~\ref{thm:GSAFE}), so we have:
$
\dvaropt \in \feasset \cap \sph_{\frac{\obs}{\pen},\normII{\frac{\obs}{\pen} - \feas}} 
\subset  \mathcal{A}  \cap \sph_{\frac{\obs}{\pen},\normII{\frac{\obs}{\pen} - \feas}} 
$.

\begin{figure}
\center
\includegraphics[width=0.65\textwidth]{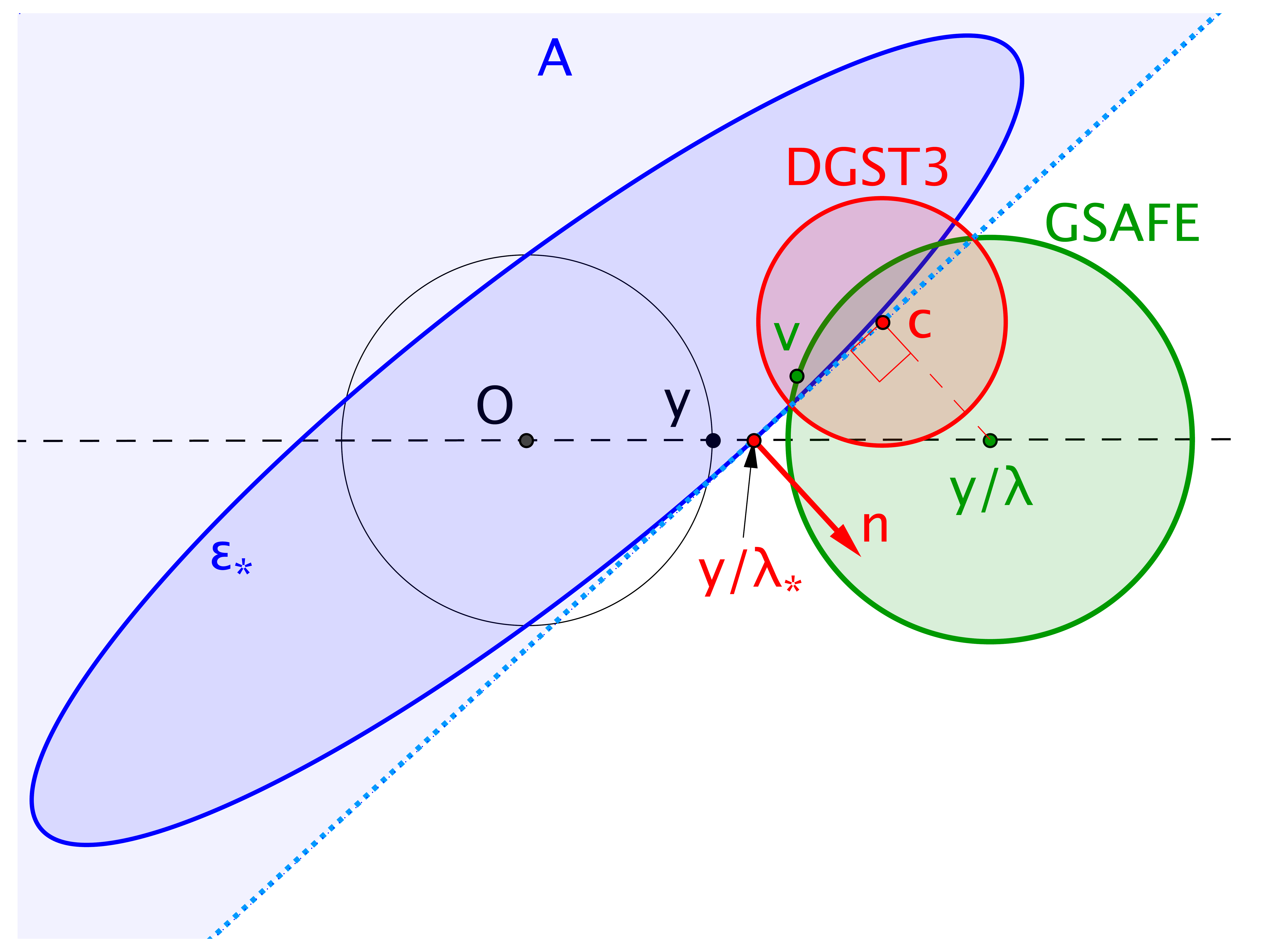} 
\caption{Geometrical illustration of regions associated to the screening tests for the \glasso}
\label{fig:ST3_glasso}
\end{figure}

Then the new sphere $\sph^\ghsst{}$ is defined as the bounding sphere of 
$\mathcal{A}  \cap \sph_{\frac{\obs}{\pen},\normII{\frac{\obs}{\pen} - \feas}} $,
 its center is the projection of $\dfrac{\obs}{\lambda}$ on $\mathcal{H}$ which is given by:
\begin{align*}
\sphcent
&=\dfrac{\obs}{\lambda} - \left( \dfrac{\nvec^T\obs }{\lambda}
- \grwght_{g_*}^2\right)\dfrac{\myvec{n}}{\normII{\nvec}^2}
\end{align*}
and its radius is given by the Pythagoras theorem:
\begin{align*}
\sphrad = \sqrt{\normII{\dfrac{\obs}{\lambda} - \sphcent }^2-\normII{\dfrac{\obs}{\lambda}- \feas }^2}
\end{align*}

We now formally check that $\dvaropt \in \sph_{\sphcent, \sphrad}$ by showing that 
${\mathcal{A} \cap \sph_{\frac{\obs}{\pen},\normII{\frac{\obs}{\pen} - \feas}} \subset \sph^\ghsst{}}$.
Let $\dvar \in \mathcal{A} \cap \sph_{\frac{\obs}{\pen},\normII{\frac{\obs}{\pen} - \feas}}$, then:
\begin{align*}
\normII{ \dfrac{\obs}{\pen} - \feas }^2 
&
\geq \normII{ \dvar - \dfrac{\obs}{\pen} }^2 \\
& = \normII{ \dvar - \dfrac{\obs}{\pen} + \left (\frac{ \nvec^T \obs}{\pen} 
	-  \grwght_{g_*}^2  - \frac{ \nvec^T \obs}{\pen} +  \grwght_{g_*}^2 \right ) \dfrac{\nvec}{\normII{\nvec}^2}}^2 \\
& = \normII{ \dvar - \sphcent }^2 + \normII{ \left (\dfrac{\nvec ^T \obs}{\pen} -  \grwght_{g_*}^2 \right )\dfrac{\nvec}{\normII{\nvec}^2} } ^2 
-2\dfrac{\grwght_{g_*}^2}{\normII{\nvec}^2} \left (\frac{\lambda_*}{\pen} -  1 \right )  
	\left ( \dvar^T \nvec -  \dfrac{\lambda_*}{\pen} \grwght_{g_*}^2 + \left ( \frac{\lambda_*}{\pen} -1 \right ) \grwght_{g_*}^2 \right ).
\end{align*}	
Where the last equality is obtained using the definition of $\nvec$:
\begin{align*}
\dfrac{\nvec ^T \obs}{\pen} -  \grwght_{g_*}^2 =   \frac{\lambda_*}{\pen} \dfrac{\obs^T \nvec}{\lambda_*} - \grwght_{g_*}^2   =  \left ( \frac{\lambda_*}{\pen} -1 \right ) \grwght_{g_*}^2 
\end{align*}
Then, using the definition of $\sphcent$, we have:
\begin{align*}
\normII{ \dfrac{\obs}{\pen} - \feas }^2 &\geq \normII{ \dvar \! - \! \sphcent }^2 + 
	\normII{\dfrac{\obs}{\pen } - \sphcent}^2 
+ 2 \dfrac{\grwght_{g_*}^2}{\normII{\nvec}^2} \! \left ( \! \frac{\lambda_*}{\pen} \! -  \! 1 \! \right )  \! \left ( \grwght_{g_*}^2 \! - \dvar^T \nvec \right )
\!.
\end{align*}
As $\dvar$ is contained in $ \mathcal{A}$ we have $0 \leq \grwght_{g_*}^2 - \dvar^T \nvec$ and:
\begin{align*}
\normII{ \dfrac{\obs}{\pen} - \feas }^2 &\geq  \normII{ \dvar - \sphcent }^2
+\normII{\dfrac{\obs}{\pen } - \sphcent}^2
\end{align*}

We finally obtain the radius:
\begin{align*}
\normII{ \dvar - \sphcent }^2 & \leq  { \normII{ \dfrac{\obs}{\pen} - \feas }^2 -\normII{\dfrac{\obs}{\pen} - \sphcent}^2} = \sphrad^2
.
\end{align*}
Then $\mathcal{A} \cap \sph_{\frac{\obs}{\pen},\normII{\frac{\obs}{\pen} - \feas}} 
\subset \sph^\ghsst{}$ and $\dvaropt \in \sph^{\ghsst{}}$. Lemma~\ref{thm:sphere_test_glasso}
 concludes the proof of Lemma~\ref{thm:GHSST}.
\end{proof}

\end{document}